\documentclass{article}

\usepackage{microtype}
\usepackage{graphicx}
\usepackage{subcaption}
\usepackage{booktabs} 

\usepackage{hyperref}

\usepackage[preprint]{icml2026}

\usepackage{microtype}      
\usepackage{xcolor}         
\usepackage{graphicx}
\usepackage{wrapfig}
\usepackage{multirow}
\usepackage{bm}

\usepackage{amsmath}
\usepackage{amssymb}
\usepackage{mathtools}
\usepackage{amsthm}

\usepackage{algorithmic}

\usepackage[capitalize,noabbrev]{cleveref}

\theoremstyle{plain}
\newtheorem{theorem}{Theorem}[section]

\newtheorem{lemma}[theorem]{Lemma}

\theoremstyle{definition}
\newtheorem{definition}[theorem]{Definition}
\newtheorem{assumption}[theorem]{Assumption}
\theoremstyle{remark}

\usepackage[textsize=tiny]{todonotes}

\definecolor{myColor}{rgb}{0,0,0}

\begin{document}
\icmltitlerunning{Manifold-Constrained Energy-Based Transition Models for Offline Reinforcement Learning}

\twocolumn[

\icmltitle{Manifold-Constrained Energy-Based Transition Models for Offline Reinforcement Learning}


\icmlsetsymbol{equal}{*}

\begin{icmlauthorlist}
\icmlauthor{Zeyu Fang}{yyy}
\icmlauthor{Zuyuan Zhang}{yyy}
\icmlauthor{Mahdi Imani}{xxx}
\icmlauthor{Tian Lan}{yyy}
\end{icmlauthorlist}

\icmlaffiliation{yyy}{Department of Electrical and Computer Engineering, George Washington University}
\icmlaffiliation{xxx}{Department of Electrical and Computer Engineering, Northeastern University}

\icmlcorrespondingauthor{Zeyu Fang}{joey.fang@gwu.edu}

\icmlkeywords{Offline reinforcement learning, manifold learning, xxx}

\vskip 0.3in
]



\printAffiliationsAndNotice{}  
\begin{abstract}
Model-based offline reinforcement learning is brittle under distribution shift: policy improvement drives rollouts into state--action regions weakly supported by the dataset, where compounding model error yields severe value overestimation. We propose \emph{Manifold-Constrained Energy-based Transition Models} (MC-ETM), which train conditional energy-based transition models using a \emph{manifold projection--diffusion} negative sampler. MC-ETM learns a latent manifold of next states and generates near-manifold hard negatives by perturbing latent codes and running Langevin dynamics in latent space with the learned conditional energy, sharpening the energy landscape around the dataset support and improving sensitivity to subtle out-of-distribution deviations. For policy optimization, the learned energy provides a single reliability signal: rollouts are truncated when the minimum energy over sampled next states exceeds a threshold, and Bellman backups are stabilized via pessimistic penalties based on \emph{Q}-value-level dispersion across energy-guided samples. We formalize MC-ETM through a hybrid pessimistic MDP formulation and derive a conservative performance bound separating in-support evaluation error from truncation risk. Empirically, MC-ETM improves multi-step dynamics fidelity and yields higher normalized returns on standard offline control benchmarks, particularly under irregular dynamics and sparse data coverage.
\end{abstract}

\section{Introduction}
Offline reinforcement learning (RL) learns a policy solely from a fixed dataset collected by one or more behavior policies, without further interaction with the environment~\cite{prudencio2023survey}. A fundamental challenge is \emph{distribution shift}: as policy optimization progresses, the learned policy queries state--action pairs that are weakly supported by the dataset, where value estimation is poorly constrained and can become severely over-optimistic, leading to unstable training or performance collapse~\cite{fujimoto2019off}.

A common response is to suppress out-of-distribution (OOD) actions or constrain the learned policy toward the behavior distribution~\cite{fujimoto2021minimalist,yang2022rorl,mao2024offlinereinforcementlearningood}. While effective at limiting extrapolation, such constraints can be overly conservative when the dataset is suboptimal. Model-based offline RL instead learns a transition model from trajectories and performs policy evaluation and improvement through synthetic rollouts~\cite{yu2020mopo,moerland2023model,sun2023model}. However, this shifts the difficulty to \emph{robust transition modeling under shift}: once rollouts approach the boundary of dataset support, even small transition errors can compound rapidly and drive value targets toward unrealistic, high-return trajectories~\cite{janner2019trust}.

This brittleness is amplified in realistic environments for two structural reasons. First, many systems exhibit \emph{multi-modal or discontinuous} transitions (e.g., mode switching, bifurcations, contact-like effects), where regression-style forward models trained with mean squared error average incompatible outcomes and yield implausible predictions~\cite{pfrommer2021contactnets, allen2023graph}. Second, although observations may be high-dimensional, valid next states often concentrate near low-intrinsic-dimensional structures; perturbations that are small in ambient space can correspond to large departures from the transition support, destabilizing long-horizon evaluation even near the data boundary~\cite{chiappa2023latent, rafailov2021offline}.

We propose \emph{Manifold-Constrained Energy-based Transition Models} (MC-ETM) for robust offline model-based RL. MC-ETM trains conditional energy-based transition models with a geometry-aware \emph{manifold projection--diffusion} (MPD) mechanism~\cite{yoon2023energy}: it learns a latent manifold of next states and generates near-manifold hard negatives by perturbing latent codes of observed transitions and running Langevin dynamics under the learned energy. Concentrating negatives near the dataset support while enforcing conditional inconsistency forces the energy to form sharper contours around valid transitions, improving sensitivity to small but consequential OOD deviations.

MC-ETM then uses the learned conditional energy as a single reliability signal for offline policy optimization. We truncate synthetic rollouts when the inferred minimum energy over sampled next states exceeds a threshold, and we stabilize Bellman backups using pessimistic penalties based on \emph{$Q$-value-level} dispersion across energy-guided samples. We formalize the resulting algorithm as policy optimization in a hybrid pessimistic Markov Decision Process (MDP) and derive a conservative performance bound that separates in-support evaluation error from truncation risk.

Our main contributions are:
\begin{itemize}
    \item \textbf{Geometry-aware energy-based transition learning:} We introduce MC-ETM with MPD, which generates near-manifold hard negatives via latent perturbation and Langevin dynamics to sharpen the conditional energy landscape near the dataset support.
    \item \textbf{Reliability-aware offline policy optimization:} We use the learned energy to control rollout reliability through minimum-energy truncation and to stabilize learning via pessimistic $Q$-value penalties computed from energy-guided samples.
    \item \textbf{Theory and evidence:} We provide a hybrid pessimistic MDP formulation with a conservative bound separating in-support evaluation error from truncation risks, and demonstrate improved dynamics fidelity and higher normalized returns on standard offline control benchmarks (including D4RL MuJoCo), particularly under irregular dynamics and sparse data coverage.
\end{itemize}

\section{Related Work}
\label{sec:related_work}

\paragraph{Model-free offline RL.}
A large body of work mitigates distribution shift without an explicit dynamics model by constraining policy improvement toward the dataset support via behavior regularization or conservative value estimation (e.g., BCQ/BEAR/CQL/IQL/TD3+BC/EDAC and related variants)~\cite{fujimoto2019off, kumar2019stabilizing,  kumar2020conservative,kostrikov2021offline,fujimoto2021minimalist, an2021uncertainty, bai2022pessimistic, zhang2025learning}. These methods improve robustness by limiting extrapolation on OOD samples and value targets, but they can become overly conservative when the dataset is suboptimal and improvement requires generalization.

\paragraph{Model-based offline RL.}
Model-based offline RL learns a transition model from data and performs policy evaluation and improvement via synthetic rollouts, typically coupled with pessimism to reduce exploitation of model error under rollout shift (e.g., MOPO/MOReL/COMBO/MOBILE and related frameworks)~\cite{yu2020mopo,kidambi2020morel,yu2021combo,rigter2022rambo, sun2023model, zhang2024modeling, yu2025look}. Performance hinges on rollout reliability near the dataset boundary, where small transition errors compound and common uncertainty surrogates (often next-state disagreement/variance) can be misaligned with decision-relevant value overestimation in high-dimensional settings.

\paragraph{Energy-based transition modeling.}
Energy-based transition models (ETMs) represent conditional dynamics via a scalar energy over candidates and perform inference by sampling, capturing multi-modal or discontinuous transitions without collapsing to a single averaged prediction~\cite{du2019implicit, chen2024offline}. Because ETMs are trained contrastively, the negative-sample distribution is central: in high dimensions, noise-initialized negatives are often far from the dataset support, yielding ``easy'' contrasts that under-shape the energy landscape precisely in the near-support regime where subtle deviations govern rollout reliability.~\cite{pang2020learning, xiao2020vaebm}

\paragraph{Our approach.}
Our approach makes energy-based transition learning geometry-aware for offline model-based RL by introducing an MPD mechanism that learns a latent manifold of next states and generates near-manifold hard negatives via latent perturbations and Langevin dynamics under the learned conditional energy, sharpening the energy boundary near the dataset support.
The learned energy then serves as a unified reliability signal for offline policy optimization, enabling minimum-energy rollout truncation and value-level pessimistic regularization based on \emph{$Q$}-dispersion across energy-guided samples.

\section{Preliminaries}

\label{sec:preliminaries}

\paragraph{General RL}

The goal of reinforcement learning is to learn a policy that maximizes the expected return of an unknown specific Markov Decision Process (MDP), referred as the environment. A standard discounted MDP denoted as $\mathcal{M}$ can be defined as a six-element tuple
\(
\mathcal{M} = (\mathcal{S}, \mathcal{A}, P, r, \gamma, \mu_0),
\)
where \(\mathcal{S}\) is the state space, \(\mathcal{A}\) is the action space,
\(P(s' \mid s,a)\) is the state transition probability distribution, in which $s, s' \in \mathcal{S}$ and $a \in \mathcal{A}$. \(r(s,a)\in\mathbb{R}\) is the reward function,
\(\gamma \in (0,1)\) is the discount factor, and $\mu_0$ is the initial distribution of states. For a given policy $\pi(a|s)$, it can iteratively interact with this MDP in discrete time steps and induce a trajectory
\((s_0,a_0,r_0,s_1,\dots)\).
Based on $\pi$, the value function $V^\pi(s) = \mathbb{E}_{(s_t, a_t)\sim\mathcal{M},\pi} \left[ \sum_{t = 0}^{\infty} \gamma^{t} r(s_{t}, a_{t}) \mid s_{0} = s \right]$ is defined as the expected discounted cumulative rewards, or expected returns of the trajectories under $\pi$ started from the current state. Similarly, the Q-function $Q^\pi(s,a)=\mathbb{E}_{(s_t, a_t)\sim\mathcal{M},\pi} \left[ \sum_{t=0}^{\infty} \gamma^{t} r(s_t, a_t) \mid s_0 = s, a_0 = a \right]$ is defined as the expected returns under $\pi$ given a specific state and action. Thus, the objective of RL can be mathematically defined as finding the optimal policy $\pi^*$:
\begin{equation}
    \pi^* = \arg\max_\pi  \mathbb{E}_{s_0\sim\mu_0}\left[V^\pi(s_0)\right].
\end{equation}

\paragraph{Offline MBRL}

In offline RL, the learner does not have access to the environment during training but is given a fixed dataset
\(\mathcal{D} = \{(s_i, a_i, r_i, s'_i)\}_{i=1}^N\)
 instead, which is collected by an unknown behavior policy \(\pi_\beta\). The dataset provides an empirical state-action distribution with a probability density function
\(p_{\mathcal{D}}(s,a)\), and we denote the set of states and actions in the offline dataset by \(\mathcal{S}_{\mathcal{D}}\subseteq\mathcal{S}\) and
\(\mathcal{A}_{\mathcal{D}}\subseteq\mathcal{A}\), which are also the supports of the visited states and actions.

A major difficulty is that an improved policy \(\pi\) may increase the probability of state-action pairs
\((s,a)\) that are rare or absent under \(p_{\mathcal{D}}(s,a)\). We refer to such pairs as OOD pairs whose values are not constrained by the dataset. Since most deep reinforcement learning algorithms adopt a parameterized neural network $Q_\theta$ to approximate the real Q-function, 
when these methods query Q-values on OOD inputs during policy improvement, the function approximation can easily produce arbitrarily optimistic estimates, resulting in training instability or even failure.

Model-based approaches alleviate this by learning an approximate dynamics model
\(\hat{P}(s'| s,a)\) and using it to simulate synthetic rollouts.
Given a base offline RL algorithm, the learned model provides additional rollouts
\(\{(\hat{s}_t,a_t,\hat{r}_t,\hat{s}_{t+1})_{t=0}^{h}\}\) from which the critic and policy can be further improved, in which $h$ is the maximal rollout length, or referred as horizon.
However, when rollouts wander into regions poorly supported by $\mathcal{S}_D$ and $\mathcal{A}_D$,
the prediction error compounds and can also lead to severe value overestimation. Existing algorithms address this issue mostly by using uncertainty estimation techniques, such as ensembles, to function as a penalty to the reward. On the other hand, discontinuous dynamics can also cause high uncertainties in the next state prediction, even in in-distribution pairs, which has not yet attracted much attention.

\paragraph{ETMs}


Energy-Based Models (EBMs) provide a flexible framework for modeling probability distributions by associating a scalar energy value with each data point. Formally, given an input $x \in \mathcal{X}$, an energy function $E_\theta(x): \mathcal{X} \rightarrow \mathbb{R}$ parameterized by $\theta$ defines the probability density function $p_\theta(x)$ via the Boltzmann distribution:
\begin{equation}
    p_\theta(x) = \frac{\exp(-E_\theta(x))}{Z_x(\theta)},
\end{equation}
where $\quad Z_x(\theta) = \int \exp(-E_\theta(x)) dx$ is the intractable partition function. EBMs are not restricted to a specific distribution family (e.g., Gaussians) and can capture complicated dependencies and multi-modal structures by learning an energy landscape where observed data points are assigned lower energy values and unlikely configurations are assigned higher energy. To implement this feature, EBMs are typically trained with Maximum Likelihood Estimation (MLE), which seeks to maximize the log-likelihood of the observed data in the following gradient form:
\begin{equation}
\label{eq:MLE}
    \nabla_\theta \log p_\theta(x) = -\nabla_\theta E_\theta(x) + \mathbb{E}_{x^- \sim p_\theta(x)}[\nabla_\theta E_\theta(x^-)],
\end{equation}
where the first term decreases the energy of positive samples (\emph{i.e.,} in-distribution data point) and the second term increases the energy of negative samples generated from the model distribution. Since exact sampling from $p_\theta(x)$ is intractable, \textit{Langevin MCMC} is widely adopted to approximate negative samples. It iteratively updates a sample $\tilde{x}$ using the gradient of the energy function and additive noise:
\begin{equation}
    \tilde{x}_{k+1} \leftarrow \tilde{x}_k - \epsilon \nabla_x E_\theta(\tilde{x}_k) + \sqrt{2\epsilon} \omega_k, \quad \omega_k \sim \mathcal{N}(0, I),
\end{equation}
where $\epsilon$ is the step size.

Recent works have extended EBMs to sequential decision-making tasks by formulating the environment dynamics as a conditional distribution. An ETM represents the transition probability $p(s'|s, a)$ implicitly through a conditional energy function $E_\theta(s, a, s')$:
\begin{equation}
    p_\theta(s'|s, a) = \frac{\exp(-E_\theta(s, a, s'))}{\int \exp(-E_\theta(s,a,s')) ds'}.
\end{equation}
This formulation allows the model to capture discontinuous or multi-modal dynamics that are challenging for standard forward models. To train such models, previous methods employ contrastive learning objectives, such as InfoNCE, which can be viewed as a robust variant of MLE. During training, negative next-state samples are generated via conditional Langevin dynamics starting from noise or perturbed data. 
For inference, we approximately sample from $p_\theta(s'|s,a)$ using conditional Langevin dynamics. When a single-point prediction is needed, we report the final Langevin iterate as a low-energy sample.

\section{Methodology}
\label{sec:method}

In this section, we present the proposed MC-ETM framework. We detail the manifold-constrained training process of the transition model, followed by our uncertainty-aware policy optimization strategy. Finally, we provide a theoretical analysis of the proposed method.

\subsection{Manifold-Constrained Energy-Based Modeling}
\label{subsec:mc_etm}

The previous method of training ETM employs InfoNCE as the loss function, with negative samples generated independently via the Langevin MCMC process initialized from random noise. 
However, in many real-world scenarios, the dataset $\mathcal{D} \subset \mathbb{R}^d$ is high-dimensional and sparse. It typically lies on a lower-dimensional manifold $\mathcal{M}$ with intrinsic dimension $d_m \ll d$, which is related to the ambient space through a non-linear embedding.
Therefore, negative samples derived from simple Gaussian noise, as in the traditional ETMs, often lie far from the realistic data manifold due to the curse of dimensionality. These ``easy'' negatives fail to force the energy function to learn tight decision boundaries around the true transition dynamics, leading to loose energy landscapes that are insensitive to subtle OOD deviations. This problem further exacerbates the issue of inefficient training, since it requires a large number of steps to converge.

To address this, we introduce the Manifold Projection-Diffusion (MPD) mechanism into transition modeling. 
We assume next-states concentrate near a low-dimensional manifold $\mathcal{M}\subset\mathcal{S}$.
We approximate it using an autoencoder with encoder $f_e:\mathcal{S}\to\mathcal{Z}$ and decoder
$f_d:\mathcal{Z}\to\mathcal{S}$, trained to minimize
$\mathcal{L}_{\mathrm{rec}}=\|s' - f_d(f_e(s'))\|_2^2$ over $s'$ in $\mathcal{D}$.



\textbf{Training with MPD Negatives.} 
Instead of adding noise directly in the state space, we generate negative samples by perturbing the latent representation of the ground-truth next state. Given $(s,a,s')$, MPD projects the ground-truth next-state to latent space: $z=f_e(s')$, 
then diffuse the encoded vector $z$ with Gaussian noise to generate perturbations of the data within the manifold: $\tilde{z} = z +\sigma \epsilon, \epsilon \sim \mathcal{N}(0, I)$,
where $\sigma$ is a noise scale parameter. 

The resulting $\tilde{z}$ will be used to sample the latent negative samples $z^-$ through Langevin MCMC (LMCMC) within the manifold first, then passed to the original state space to generate the visible negative samples $s^-$. Such a two-stage sampling process ensures that the energy function is trained with in-manifold and plausible samples, as well as random noise, thus further enhancing performance.


By defining the latent energy function 
$H_\theta(z; s,a) := E_\theta(s,a,f_d(z))$, the conditional latent energy function $\tilde{H}_\theta(z|\tilde{z};s,a)$ can be derived as:
\begin{equation}
\tilde H_\theta(z \mid \tilde z; s,a)
= H_\theta(z; s,a) + \frac{1}{2\sigma^2}\|z-\tilde z\|_2^2 .
\end{equation}
Similarly, the energy function conditioned on $\tilde{z}$ can be obtained as follows:
\begin{equation}
\tilde E_\theta(s' \mid \tilde z;s,a)
= E_\theta(s,a,s') + \frac{1}{2\sigma^2}\|f_e(s')-\tilde z\|_2^2 .
\end{equation}
Since the conditional energy function shares the same gradient over $\theta$ as the original energy function, we can learn the energy function by iterative gradient descent to maximize the likelihood of the conditional recovery, \emph{i.e.}, $\log p_\theta(s'|\tilde{z};s,a)$, while the gradient of $\log p_\theta(x|\tilde{z};s,a)$ can be obtained by applying equation \eqref{eq:MLE} in its conditional form.

Given $(s,a,s')\sim\mathcal{D}$ and
negatives $\{s_i'^-\}_{i=1}^{N_s}$, we train MC-ETM using an InfoNCE-style contrastive objective:
\begin{equation}
\label{eq:loss}
\mathcal{L}_{\mathrm{NCE}}
= -\log \frac{\exp\{-E_\theta(s,a,s')\}}
{\sum_{i=0}^{N_s}\exp\{-E_\theta(s,a,s_i^-)\}},
\end{equation}
in which we define $s_0^- = s'$, which is the positive example taken directly from the dataset. 

Finally, we summarize the training process in Algorithm \ref{alg:training}.

\textbf{Manifold-Guided Inference.}
During inference, we perform a two-stage energy minimization process to predict the next state while fixing the current state and action. The first stage is to optimize the energy in the manifold space:
\begin{equation}\nonumber
z_{k+1}= z_k - \alpha_z \nabla_z H_\theta(z_k; s,a) + \sqrt{2\alpha_z}\,\omega_k,
\end{equation}
where $\omega_k \sim \mathcal{N}(0, I)$ is the noise and $\alpha_z$ is the noise scale. $z_0$ is an initialized vector, either from the output of a pre-trained MLP-based transition model $f_e(\hat s'_{\mathrm{MLP}})$ or from Gaussian noise in latent space. The first stage will be performed iteratively $K_z$ times, then passed to the state space for another $K_s$ iterations starting with $s'_0 = f_d(z_{K_z})$:

\begin{equation}
    s'_{k+1} = s'_k - \alpha_s \nabla_{s'} E_\theta(s, a, s'_k) + \sqrt{2\alpha_s}\omega_k
\end{equation}

The motivation behind this manifold-guided two-stage inference is similar to that of the training, which accelerates the convergence of the optimal next state with fewer sampling steps in total.


\begin{algorithm}[tb]
   \caption{Training of MC-ETM}
   \label{alg:training}
\begin{algorithmic}
   \STATE {\bfseries Input:} Dataset $\mathcal{D}$, Autoencoder $(f_e, f_d)$, noise scale $\sigma$, number of samples $N_s$
   \STATE Train $(f_e, f_d)$ on $\{s' \mid (s,a,r,s') \in \mathcal{D}\}$.
   \WHILE{converged}
   \STATE Sample batch $(s, a, s')$ from $\mathcal{D}$.
   \STATE Compute $\tilde{z} = f_e(s') + \sigma \epsilon, \epsilon \sim \mathcal{N}(0, I)$.
   \STATE Generate $z^-$ from $\tilde{z}$ using LMCMC through $\tilde{H}_\theta(z|\tilde{z})$.
   \STATE Generate $\{s'^-_i\}_{i=1}^{N_s}$ from $f_d(z^-)$ using LMCMC through $\tilde{E}_\theta(s,a,s'|\tilde{z})$.
   \STATE Define $s'^-_0 = s'$
   \STATE Update $\theta$ by gradient descent on $\mathcal{L}_{\mathrm{NCE}}$ defined in Eq.\ref{eq:loss}.
   \ENDWHILE
\end{algorithmic}
\end{algorithm}

\subsection{Resolving Uncertainties with Energy-based Models}

To robustly handle distribution shifts and overestimation of OOD samples, we propose an ETM-based Uncertainty Resolution framework. The trained MC-ETM model is used in two distinct ways:
\begin{itemize}
    \item \textbf{OOD Detection and Truncation:} The final value of $E_{\theta}(s, a, s')$ after the inference iteration captures the highest possible density under the current state-action pairs. If the energy remains high, it indicates that the current state-action pair is OOD. The model rollout will be truncated at this point to prevent further exploration.
    \item \textbf{Uncertainty Penalization:}By utilizing ensembles and assigning different initial vectors and noise scales, the MC-ETM can produce various samples of the next states, which can be used to further estimate both the aleatoric and epistemic uncertainties, respectively.
\end{itemize}

Before performing a Bellman update, we evaluate $E_{\theta}(s, a, s')$ on the optimized $s'$. We define a virtual terminal signal $\nu(s,a,s')$ based on this energy:
\begin{equation}
\label{nu}
    \nu(s, a,s') = \mathbb{I}(E_{\theta}(s, a, s') > \delta)
\end{equation}
where $\delta$ is a energy threshold. If $\nu(s, a,s') = 1$, the current rollout is truncated to effectively prevent the policy from exploring high-risk OOD regions.

Unlike previous ETM-based frameworks that estimate uncertainty through the standard deviation of merely the states, our MC-ETM framework adopts $M$ ensembles $\{E_{\theta_i}\}_{i=1}^M$, with each sampling $N$ next states $\{s_{i,j}\}_{j=1}^N$ with noise in Langevin iterations. 
The target Q-value is first calculated by averaging over samples within the same ensemble model, then incorporating an energy-based penalty based on the standard deviation over the ensembled Q-values:
\begin{align}
\label{eq:target}
    & \mathcal{T}Q_\phi(s,a) = r - \lambda std\{\bar q^\pi_{\phi,\theta_i}\}_{i=1}^M + \gamma \frac{1}{M} \sum_{i=1}^M \left( \bar q^\pi_{\phi,\theta_i} \right), \nonumber \\
    &  \bar q^\pi_{\phi,\theta_i} = \frac{1}{N}\sum_{j=1}^N (1-\nu(s,a,s'_{i,j}))Q_\phi(s'_{i,j},\pi(s'_{i,j})). 
\end{align}
Here, the standard deviation of the Q-values serves as a penalty for transitions with high epistemic uncertainty estimated over ensembles. Although the exact reasons for high uncertainty can be comprehensive, including multi-modal dynamics, discontinuous transitions, or simply in-correct model predictions, it can be reflected directly by observing the abnormal fluctuation of the Q-values. On the other hand, averaging over $M*N$ samples smooths the Q-value function, especially for dynamics with inconsistent transitions or terminal conditions. Unlike the transition function, the smooth regression bias that occurs here is beneficial for the estimated Q-function, since its gradient, instead of its absolute values, is utilized for policy iteration. 

We summarize the policy optimization process with MC-ETM in Algorithm \ref{alg:policy}. Traditional MBRL methods like MOPO and previous ETM-based methods rely on merely the variance of a bootstrap ensemble of dynamics models $\text{std}(\{s'_i \sim P_{\theta_i}(s,a)\})$ to estimate uncertainty and penalize the reward. Such a mechanism considers the model error as the sole source of uncertainty. In contrast, our model includes all other factors implicitly in the standard deviations of Q-values and stops the exploration on OOD samples explicitly using the energy model, thus significantly enhancing the stability of policy training.

\begin{algorithm}[tb]
   \caption{Policy Optimization with MC-ETM}
   \label{alg:policy}
\begin{algorithmic}
   \STATE {\bfseries Input:} Pretrained MC-ETM $E_{\theta}$, Policy $\pi_\psi$, parameterized Q-function $Q_\phi$.
   \FOR{each training step}
   \STATE Rollout with MC-ETM ensembles $E_{\theta_i}$ and extra terminal condition $\nu$ (Eq. \eqref{nu})
   \FOR{batch $(s,a,r,\{s'_{i,j}\})$ in replay buffer}
   \STATE Compute penalized target $\mathcal{T}Q_\phi(s,a)$ (Eq. \eqref{eq:target}).
   \STATE Update Q-function by minimizing $(Q_\phi(s,a) - \mathcal{T}Q_\phi(s,a))^2$.
   \STATE Update Policy $\pi_\psi$ using any compatible offline RL algorithm (\emph{e.g.}, SAC).
   \ENDFOR
   \ENDFOR
\end{algorithmic}
\end{algorithm}

\subsection{Theoretical Analysis}
\label{sec:theory}

In this section, we provide a theoretical justification for MC-ETM, demonstrating that our energy-constrained transition model yields a tighter performance bound compared to previous methods.

The core insight of MC-ETM is that epistemic uncertainty can only be validly estimated and bounded by ensemble standard deviations near the data manifold, where the model inputs are supported by the offline dataset. However, in OOD regions with high estimated energy, the ensemble predictions can degenerate, resulting in loose or invalid uncertainty bounds. To address this, we construct a hybrid pessimistic operator that switches between soft uncertainty penalization and hard truncation, determined by whether the model inputs are close to the estimated manifold.

\begin{definition}
\label{def:1}
We define the \textit{Energy-Constrained OOD Set} $\mathcal{U}_\delta$ as the set of state-action pairs where the model's energy exceeds a threshold $\delta$:
\begin{equation}
    \mathcal{U}_\delta = \{ (s,a) \in \mathcal{S} \times \mathcal{A} \mid \min_{s'} E_\theta(s,a,s') > \delta \}.
\end{equation}
Therefore, the MC-ETM value iteration framework can be considered a hybrid pessimistic Bellman operator $\hat{\mathcal{T}}_{MC}^\pi$ that applies different strategies based on this set:
\begin{align}
\label{eq:13}
    & \hat{\mathcal{T}}_{MC}^\pi \hat{Q}(s,a) = r(s,a) + \gamma \mathbb{I}\left((s,a)\notin \mathcal{U}_\delta \right)\mathbb{E} \left[\hat Q(s',a')\right],
\end{align}
where $s'$ and $a'$ are sampled from the transition probability of the dynamics model $\hat P$ and the current policy $\pi$.
\end{definition}

Our main result bounds the performance gap between the optimal policy $\pi^*$ and the learned policy $\hat{\pi}$ by decomposing the error into \textit{Consistency Error} (within the manifold) and \textit{Truncation Error} (outside the manifold).

\begin{assumption}[Manifold-Constrained Uncertainty Consistency]
    \label{ass:consistency}
    For state-action pairs supported by the data manifold ($ (s,a) \notin \mathcal{U}_\delta $), the Model-Bellman Inconsistency $u(s,a)$ is a valid estimator of the true Bellman error with coefficient $\beta$:
    \begin{equation}
        |\hat{\mathcal{T}}^\pi \hat{Q}(s,a) - \mathcal{T}^\pi \hat{Q}(s,a)| \le \beta u(s,a), \quad \forall (s,a) \notin \mathcal{U}_\delta.
    \end{equation}
\end{assumption}
This assumption is strictly weaker than the global assumption required by MOBILE and other previous ensemble-based offline RL algorithms, as it only requires the ensemble to be calibrated within the low-energy regions where the model training objective is active.

\begin{theorem}[Energy-Constrained Performance Bound]
    \label{thm:bound}
    Let $J(\pi)$ denote the expected return of policy $\pi$ in the true MDP. Under Assumption \ref{ass:consistency}, the performance gap between the optimal policy $\pi^*$ and the policy $\hat{\pi}$ learned by MC-ETM is bounded by:
    \begin{align}
        & J(\pi^*) - J(\hat{\pi}) \le  2 \sum_{t=0}^{H} \gamma^t \mathbb{E}_{\pi^*} \left[ \beta u(s_t, a_t) \cdot \mathbb{I}((s_t, a_t) \notin \mathcal{U}_\delta) \right] \nonumber \\
        & + \frac{2 R_{\max}}{1-\gamma} \mathbb{P}_{\pi^*} \left( \exists t : (s_t, a_t) \in \mathcal{U}_\delta \right)
    \end{align}
\end{theorem}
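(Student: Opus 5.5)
We follow the standard pessimism template (as in MOPO/MOBILE). We first establish that the value $\hat Q^{\pi}$ learned by MC-ETM is a lower bound on the true value on the low-energy region. We then compare $\hat\pi$ with $\pi^*$ through the learned values, so that the gap telescopes along $\pi^*$'s true trajectory.

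\textbf{Definitions.} For any $\pi$, let $\hat Q^\pi$ be the fixed point of the penalized hybrid operator. It acts as the reward minus $\beta u(s,a)$ plus $\gamma\,\mathbb{I}((s,a)\notin\mathcal U_\delta)\,\mathbb{E}_{\hat P}[\hat Q(s',a')]$. This is \eqref{eq:13}, with the $Q$-dispersion penalty of \eqref{eq:target} identified with $\beta u$. Write $\hat J(\pi)=\mathbb{E}_{\mu_0}[\hat V^\pi(s_0)]$. The argument has three steps.

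\textbf{Step 1 (pessimism, lower side).} We show $\hat J(\pi)\le J(\pi)+\frac{R_{\max}}{1-\gamma}\,\mathbb{P}_\pi(\exists t:(s_t,a_t)\in\mathcal U_\delta)$ for every $\pi$. Apply the simulation (telescoping) lemma to $\hat Q^\pi$ against the true operator $\mathcal T^\pi$:
\[
\hat J(\pi)-J(\pi)=\sum_{t}\gamma^t\,\mathbb{E}_{\pi,P}\bigl[\hat{\mathcal T}^\pi_{MC}\hat Q^\pi(s_t,a_t)-\mathcal T^\pi\hat Q^\pi(s_t,a_t)\bigr].
\]
On $(s,a)\notin\mathcal U_\delta$, Assumption \ref{ass:consistency} together with the $-\beta u$ penalty makes the summand $\le 0$.

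On $(s,a)\in\mathcal U_\delta$ the summand is $-\gamma\,\mathbb{E}_P[\hat Q^\pi]$. Rather than bounding it pointwise, we treat the first entrance into $\mathcal U_\delta$ as a stopping time. The truncated rollout and the true rollout agree before that time. After it, the true return and the truncated return differ by at most $R_{\max}/(1-\gamma)$, since all values lie in $[-R_{\max}/(1-\gamma),R_{\max}/(1-\gamma)]$. This contributes the probability term.

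\textbf{Step 2 (upper side, along $\pi^*$).} Symmetrically, $J(\pi^*)\le\hat J(\pi^*)+2\sum_t\gamma^t\mathbb{E}_{\pi^*}[\beta u\,\mathbb{I}(\notin\mathcal U_\delta)]+\frac{R_{\max}}{1-\gamma}\mathbb{P}_{\pi^*}(\exists t:\cdot\in\mathcal U_\delta)$. The factor $2$ appears because we pay once for removing the penalty $\beta u$ and once for the model/true mismatch, which Assumption \ref{ass:consistency} also bounds by $\beta u$. The truncation term is handled with the same stopping-time argument as in Step 1, now along $\pi^*$. The horizon-$H$ sum arises because rollouts are of length $H$ (with $H=\infty$ allowed).

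\textbf{Step 3 (combine).} Since $\hat\pi$ maximizes $\hat J$, we have $\hat J(\hat\pi)\ge\hat J(\pi^*)$. Chaining gives
\[
J(\pi^*)\le\hat J(\pi^*)+\text{err}(\pi^*)\le\hat J(\hat\pi)+\text{err}(\pi^*)\le J(\hat\pi)+\text{err}(\pi^*)+\text{trunc}(\hat\pi).
\]
To express the truncation term with respect to $\pi^*$, matching the statement's constant $2R_{\max}/(1-\gamma)$, we must either absorb $\text{trunc}(\hat\pi)$ or show it is non-positive. The cleanest route is to note that truncation only removes nonnegative future value. If rewards are shifted to be nonnegative, then $\hat J(\hat\pi)\le J(\hat\pi)$ holds outright on the truncation part, and both $R_{\max}/(1-\gamma)$ contributions are charged to $\pi^*$.

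\textbf{Main obstacle.} The delicate point is the treatment of $\mathcal U_\delta$. The one-sided pessimism on $\hat\pi$'s side requires a sign convention, such as nonnegative rewards or truncation to a zero value. Without it, the truncation risk would be measured under $\hat\pi$ rather than $\pi^*$. If that fails, we would state the bound with $\mathbb{P}_{\pi^*}+\mathbb{P}_{\hat\pi}$ and bound each with constant $R_{\max}/(1-\gamma)$, which recovers the stated factor $2$.
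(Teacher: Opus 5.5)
The gap is in Step 3, and it comes from the statement itself. As stated, with $|r|\le R_{\max}$, the theorem is false. Your proof becomes correct once you add the hypothesis $r\ge 0$.

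Your Step 2 is correct under the stated hypotheses, and your stopping-time treatment of $\mathcal{U}_\delta$ is the right mechanism. Telescope the Bellman gaps along the true trajectory only up to the first hitting time $\tau$, using the strong Markov property. The model mismatch before $\tau$ is paid for by the Assumption, and the residual at $\tau$ is $Q^{\pi}(s_\tau,a_\tau)-\hat Q^{\pi}(s_\tau,a_\tau)=\gamma\,\mathbb{E}_P[V^{\pi}(s_{\tau+1})]$. Two points of presentation:
\begin{itemize}
\item Say it this way rather than that the truncated and true rollouts ``agree'' before $\tau$; they use different kernels.
\item Apply the $\beta u$ penalty only on $\bar{\mathcal{U}}_\delta$, as your Step 1 computation already assumes. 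Otherwise an uncontrolled $\beta u(s_\tau,a_\tau)$ appears on the $\pi^*$ side.
\end{itemize}
The gap is that $\hat J(\hat\pi)\le J(\hat\pi)$ requires the discarded continuation value $\gamma\,\mathbb{E}_P[V^{\hat\pi}(s_{\tau+1})]$ to be nonnegative. Neither of your two exits supplies this under $|r|\le R_{\max}$. A reward shift is not a normalization here, because the hybrid operator assigns zero continuation value on $\mathcal{U}_\delta$; shifting rewards changes the model MDP and therefore changes $\hat\pi$. Your fallback $\frac{R_{\max}}{1-\gamma}(\mathbb{P}_{\pi^*}+\mathbb{P}_{\hat\pi})$ is a different bound from the stated one, since nothing controls $\mathbb{P}_{\hat\pi}$ by $\mathbb{P}_{\pi^*}$.

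\textbf{Why the gap cannot be closed.} Take states $s_0,s_A,s_B$ with $r(s_0,\cdot)=-1$. Action $A$ at $s_0$ leads to an absorbing $s_A$ paying $-1$ per step, and action $B$ leads to an absorbing $s_B$ paying $-2$ per step. Let $\mathcal{U}_\delta=\{(s_0,B)\}$, and let the model be exact elsewhere with $u\equiv 0$, so the Assumption holds trivially. In the hybrid model $\hat Q(s_0,B)=-1>-\frac{1}{1-\gamma}=\hat Q(s_0,A)$. So $\hat\pi$ plays $B$ and uses truncation to escape the negative rewards, while $\pi^*$ plays $A$ and never enters $\mathcal{U}_\delta$. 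The right-hand side of the theorem is then $0$, yet $J(\pi^*)-J(\hat\pi)=\frac{\gamma}{1-\gamma}$. Shifting rewards by $+2$ flips $\hat\pi$ back to $A$, which shows concretely why the shift is not without loss of generality.

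\textbf{What you can claim.} With $r\ge 0$ made explicit, your chain closes for $H=\infty$ and even gives the constant $\gamma\frac{R_{\max}}{1-\gamma}$ in place of $\frac{2R_{\max}}{1-\gamma}$. Without it, your fallback is the correct conclusion.

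\textbf{Comparison with the paper.} The paper's proof hides the same issue rather than resolving it:
\begin{itemize}
\item It feeds $\Gamma=\beta u$ on $\bar{\mathcal{U}}_\delta$ and $V_{\max}$ on $\mathcal{U}_\delta$ into a PEVI bound. That bound analyzes a planner that subtracts $\Gamma$, not the truncating one MC-ETM uses.
\item Its truncation lemma asserts $0\le V^\pi-\hat V^\pi_{\mathrm{MC}}$, which needs $r\ge 0$.
\item It converts the $\mathcal{U}_\delta$ term into a hitting probability by claiming $\mathcal{U}_\delta$ is visited at most once along $\pi^*$'s trajectory. This confuses truncated model rollouts with untruncated true trajectories; your stopping-time argument is the correct replacement for that step.
\end{itemize}
Finally, neither argument justifies a finite $H$ in $\sum_{t=0}^{H}$. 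The rollout length does not truncate the discounted sum along $\pi^*$ in the true MDP.
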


\textit{Proof Sketch.} We analyze the value difference $V^{\pi^*} - V^{\hat{\pi}}$ by expanding the telescope sum of Bellman errors along the trajectory of $\pi^*$. We partition the trajectory space based on the first hitting time of the set $\mathcal{U}_\delta$.
\begin{enumerate}
    \item \textbf{Scenario I:} When the policy remains within the energy manifold, the Bellman error is bounded by the penalized operator. Since we assume $(s,a) \notin \mathcal{U}_\delta$, Assumption \ref{ass:consistency} holds, and the term $\beta u(s,a)$ effectively upper-bounds the simulation error. This corresponds to the first term as the \textit{Consistency Error}. 
    \item \textbf{Scenario II:} When the policy attempts to visit a high-energy state, the rollout is truncated via $\nu$. In the worst case, the value difference is bounded by the maximum possible cumulative reward. This corresponds to the second term as the \textit{Truncation Risk}. By explicitly truncating these paths, we prevent the ``Model Exploitation'' phenomenon where standard ensembles might erroneously predict high values in OOD regions.
\end{enumerate}

This bound unifies the previous theoretical advantages of hard truncation and soft penalization. Inside the trust region, our bound accounts for the residual model error via the Consistency Error term, providing a tighter performance guaranty. While in high-energy regions where $u(s,a)$ can be arbitrarily large or erratic, our bound replaces the loose global penalty with a bounded truncation risk, ensuring that the optimization landscape is not polluted by the high variance of OOD ensemble predictions. The energy function $E_\theta$ acts additionally as a gatekeeper, ensuring the penalization mechanism is applied only where it is statistically valid.

\section{Experiments}

\label{sec:experiments}





\label{sec:experiments}

In this section, we evaluate the performance of our proposed MC-ETM framework and compare it with other existing approaches. Our experiments are designed to answer the following questions:
\begin{enumerate}
    \item \textbf{Dynamics Modeling:} Can manifolds help MC-ETM capture discontinuous dynamics better and more accurately where previous methods may fail?
    \item \textbf{Offline RL Performance:} Does MC-ETM achieve higher normalized returns on standard offline RL benchmarks compared to baselines?
    \item \textbf{Uncertainty Penalization:} How effective is the energy-based uncertainty resolution mechanism, and does it improve training stability?
\end{enumerate}

\subsection{Modeling Discontinuous Dynamics}
\label{subsec:toy_exp}

To obtain a clear view of the limitations of existing methods and the advantages of MC-ETM, we first perform an experiment on a didactic transition function $f: (\mathcal{S,A}) \to \mathcal{S}$, in which the state space $\mathcal{S}$ and action space $\mathcal{A}$ are both one-dimensional with a value range $[-1, 1]$. The true transition dynamics are defined by a discontinuous piecewise function that contains
multiple sharp sectors, with its details listed in the Appendix. We collected a dataset of 10,000 transitions covering the state space.

In this low-dimensional example, we primarily illustrate why energy-based transition models outperform smooth regressors on discontinuities. We then construct a high-dimensional nonlinear embedding where near-manifold negative sampling matters and compare MC-ETM with standard ETM to analyze the effect of MPD.
The compared models include MLP with standard probabilistic neural networks trained using MSE loss and a conditional diffusion model generating $s'$ via denoising. We show the visualized results in Figure \ref{fig:3d} and Figure \ref{fig:2d}, respectively.

\begin{figure*}
    \centering
    \includegraphics[width=0.9\linewidth]{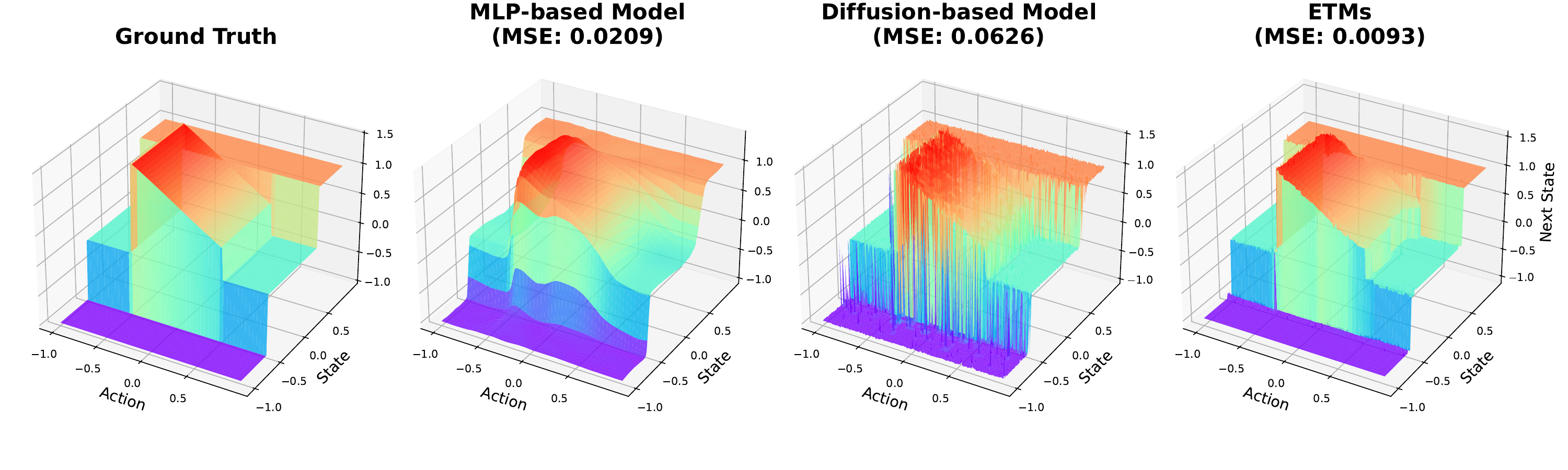}
    \caption{An illustrative example on fitting a discontinuous transition function.}
    \label{fig:3d}
\end{figure*}

\begin{figure}
    \centering
    \includegraphics[width=\linewidth]{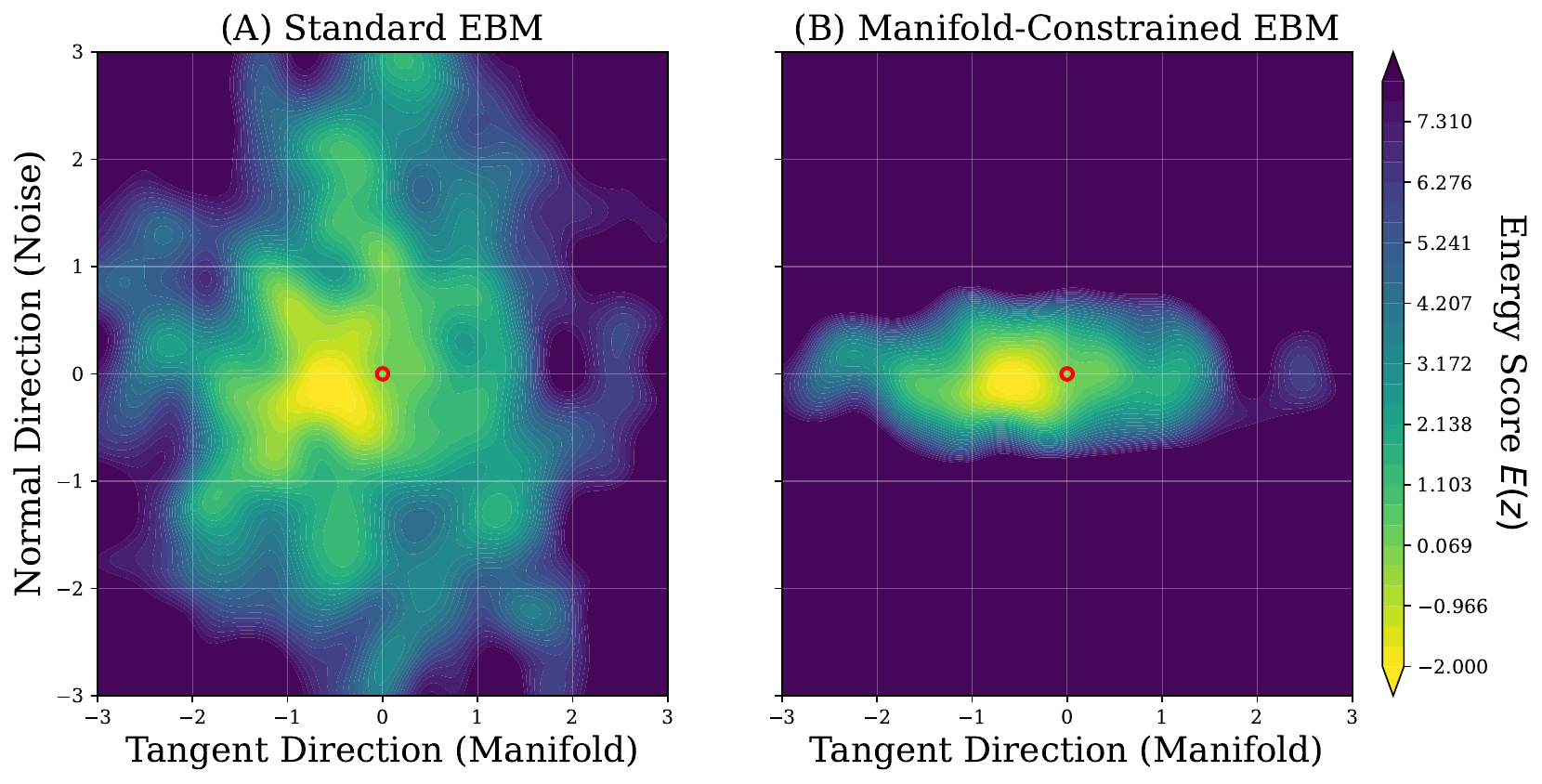}
    \caption{Conceptual visualization of energy landscapes on a 2D slice of the high-dimensional state space}
    \label{fig:2d}
\end{figure}
According to the prior result, the MLP models exhibit a strong smoothing bias, especially near the discontinuity $a= \pm 0.5$ and $s = \pm 0.5$, where the MLP predicts the average of the two modes, resulting in physically impossible states that never occur in the dataset, further distorting the predictions of in-distribution data. 
While the diffusion model avoids smoothing and captures the bi-modality better than MLPs, it suffers from high variance. Since diffusion models sample from the distribution rather than output the maximum likelihood estimate directly, the generated next states often deviate significantly from the precise function curve, leading to an even higher error. Furthermore, its training and inference time is much higher than that of the MLP due to the iterative denoising process.
The ETM, on the other hand, successfully captures the sharp discontinuity and achieves the lowest error, even in out-of-distribution points. In this example, there are no distinct differences between standard ETM and the MC-ETM, as manifold learning is omitted for one-dimensional state space.

Next, we embed the state and action spaces non-linearly into two high-dimensional spaces and then visualize the energy landscape learned by the two models on a 2D slice in Figure \ref{fig:2d}. The visualization slices the energy function $E(s, a, s')$ at a ground truth transition $s'$, spanned by a tangent vector along the data manifold and a normal vector orthogonal to the manifold. Darker regions indicate lower energy. While both models capture the general trend, the standard EBM exhibits a rugged, shallow landscape susceptible to off-manifold drift. The Manifold-Constrained EBM maintains a sharp energy barrier against invalid perturbations, despite local irregularities, recognizing that even small steps off the manifold are highly improbable.


\subsection{D4RL Benchmark Evaluation}
\label{subsec:d4rl_exp}

\begin{table*}[t]
\centering
\caption{Normalized return on D4RL MuJoCo locomotion
  tasks, averaged over 5 random seeds. The best results and the significant results are bolded. We use the letter r, m and e to abbreviate random, medium and expert, respectively.}
\label{tab:locomotion}
\resizebox{\textwidth}{!}{
\begin{tabular}{lcccccccc|c}
\toprule
\textbf{Task Name} & \textbf{CQL} & \textbf{TD3+BC} & \textbf{EDAC} & \textbf{MOPO} & \textbf{COMBO} &  \textbf{RAMBO} & \textbf{MOBILE} & \textbf{EMPO*} & \textbf{ETM} \\
\midrule
halfcheetah-r     & 31.3 & 11.0 & 28.4 & 38.5 & 38.8 & 39.5 & 39.3 & 14.3 & \textbf{40.7 ± 1.1} \\
hopper-r     & 5.3  & 8.5  & 25.3 & 31.7 & 17.9 & 25.4 & \textbf{31.9} & 30.9 & \textbf{31.8 ± 0.3} \\
walker-r     & 5.4  & 1.6  & 16.6 & 7.4  & 7.0  & 0.0  & {17.9} & 13.7 & \textbf{19.6 ± 1.3} \\
halfcheetah-m     & 46.9 & 48.3 & 65.9 & 73.0 & 54.2 & \textbf{77.9} & 74.6 & 21.2 & 76.9 ± 0.6 \\
hopper-m     & 61.9 & 59.3 & 101.6 & 62.8 & 97.2 & 87.0 & \textbf{106.6} & 32.9 & \textbf{107.0 ± 1.1} \\
walker-m     & 79.5 & 83.7 & \textbf{92.5} & 84.1 & 81.9 & 84.9 & 87.7 & 55.4 & \textbf{92.7 ± 0.7} \\
halfcheetah-m-r   & 45.3 & 44.6 & 61.3 & \textbf{72.1} & 55.1 & 68.7 & 71.7 & 8.4 & \textbf{72.4 ± 1.5} \\
hopper-m-r   & 86.3 & 60.9 & 101.0 & 103.5 & 89.5 & 99.5 & 103.9 & 34.9 & \textbf{104.8 ± 0.8} \\
walker-m-r   & 76.8 & 81.8 & 87.1 & 85.6 & 56.0 & \textbf{89.2} & \textbf{89.9} & 66.1 & \textbf{90.2 ± 1.3} \\
halfcheetah-m-e   & 95.0 & 90.7 & 106.3 & 90.8 & 90.0 & 95.4 & \textbf{108.2} & 28.1 & 105.2 ± 2.9 \\
hopper-m-e   & 96.9 & 98.0 & 110.7 & 81.6 & 111.1 & 88.2 & 112.6 & 41.8 & \textbf{113.8 ± 0.9} \\
walker-m-e   & 109.1 & 110.1 & {114.7} & 112.9 & 103.3 & 56.7 & {115.2} & 76.2 & \textbf{114.9 ± 1.8} \\
\midrule
\textbf{Average}      & 61.6 & 58.2 & 76.0 & 70.3 & 66.8 & 67.7 & 80.0 & 35.3 & \textbf{80.8} \\
\bottomrule
\end{tabular}
}
\end{table*}

We evaluate MC-ETM on the widely used continuous-control benchmark D4RL \cite{fu2020d4rl}, focusing on the Gym-Mujoco tasks (Hopper, Walker2d, HalfCheetah) under four dataset qualities: \textit{Random}, \textit{Medium}, \textit{Medium-Replay}, and \textit{Medium-Expert}. Following common practice, we report the average normalized return over $N=10$ evaluation rollouts per run and over $5$ random seeds. Error bars correspond to one standard deviation across seeds. The details about experimental configurations and additional experiments are included in the Appendix. 

We compare against a comprehensive suite of offline RL algorithms, including model-free algorithms CQL \cite{kumar2020conservative}, TD3+BC \cite{fujimoto2021minimalist}, EDAC \cite{an2021uncertainty}, and model-based algorithms
MOPO \cite{yu2020mopo}, COMBO \cite{yu2021combo}, RAMBO \cite{rigter2022rambo}, MOBILE~\cite{sun2023model} and the ETM-based algorithm EMPO \cite{chen2024offline}.
The offline policy iteration algorithm of MC-ETM is based on the SAC~\cite{haarnoja2018soft}, utilizing the authors' recommended architecture and optimization settings; however, the proposed method can be generalized to other standard offline policy learning patterns. The reported results of each method are from their published papers, except for EMPO, for which we reproduced the experiments using the official code and hyperparameters but found the reported results severely questionable.

\textbf{Overall Performance:} Table \ref{tab:locomotion} reports the normalized returns of all methods. Among them, MC-ETM achieves state-of-the-art performance and significantly outperforms traditional model-free and model-based algorithms. Particularly, the performance of EMPO drops significantly from the reported results and is lower than that of other SOTA algorithms. This phenomenon indicates that although ETM can reduce the prediction error, a proper uncertainty penalization mechanism is still crucial for robust policy training. Merely utilizing the standard deviation of predicted states in the explicit state space is often insufficient to estimate epistemic uncertainty or stabilize policy training. In contrast, our method shows consistent improvements, validating the efficacy of the manifold constraint and energy-based uncertainty penalization.

\subsection{Ablation Study}
\label{subsec:ablation}
To further analyze the performance of MC-ETM quantitatively across various metrics and investigate the contributions of each component and impacts of hyper-parameters, we conduct additional experiments in the ablation study and record detailed configurations and results in the Appendix.

\textbf{Prediction Error Analysis:}
To analyze the prediction accuracy of MC-ETM, we measured the prediction error across various environments with crossing datasets. Compared to MLPs, Diffusion Models, and standard ETMs, MC-ETM achieves the lowest MSE in almost all environments. Specifically, in out-of-distribution regions, MC-ETM still maintains a relatively low error, proving that it learns the underlying structure rather than merely memorizing the mean.

\textbf{OOD Detection and Penalty Correlation:}
To verify the effectiveness of our energy-based penalty, we computed the correlation between the penalty value and the actual model prediction error in the environment using the Pearson correlation coefficient metric. We also investigated the distribution of prediction error in both in-distribution and OOD samples, as determined by MC-ETM, verifying the effectiveness of energy-based OOD detection.

\textbf{Components and Hyperparameters}
We demonstrate the effectiveness of each component in MC-ETM by running experiments with the following configurations:
    \textbf{(i) Without Manifold Constraint (Standard ETM):} Removing the MPD and reverting to standard ETMs with energy-based uncertainty penalization intact.
    \textbf{(ii) Without OOD Detection:} remove the additional termination $\nu$ and only use the original termination condition. 
    \textbf{(iii) Without Energy-based Penalty:} Setting $\lambda = 0$ to deactivate the penalization mechanism in MC-ETM. 

Besides, we evaluate MC-ETM under various hyperparameter values, including the manifold dimension, the energy threshold for OOD detection $\delta$, and the penalty coefficient $\lambda$, to analyze their impacts on performance.


\section{Conclusion}
\label{sec:conclusion}

In this paper, we introduce MC-ETM, a unified energy model-based framework designed to address the critical challenges of offline MBRL: the overestimation of values on OOD samples and the difficulty of modeling non-smooth, discontinuous dynamics.
We replaced standard Gaussian noise negative sampling with MPD, forcing the energy function to learn tighter decision boundaries and effective OOD discrimination.
Besides, we design a new uncertainty penalization mechanism utilizing the energy value for OOD detection and the standard deviation of Q-values from multi-scale sampling for penalty estimation.
Empirically, we demonstrate that MC-ETM.produces less prediction error in both didactic environments and outperforms SOTA offline RL algorithms on D4RL benchmarks by (...\%). Finally, ablation studies confirmed the effectiveness of each component used in MC-ETM. 
Future work may explore extending this framework to high-dimensional visual observation spaces, where the manifold assumption is even more critical. Additionally, investigating the integration of MC-ETM with more advanced sampling strategies to further accelerate the inference process remains a promising direction.

\section*{Impact Statement}

This paper presents work whose goal is to advance the field of Machine
Learning. There are many potential societal consequences of our work, none of 
which we feel must be specifically highlighted here.

\bibliography{bibliography}
\bibliographystyle{icml2025}

\newpage
\appendix
\onecolumn

\section{Implementation Details}

We implement the MC-ETM based on the framework of OfflineRLKit \cite{sun2023model}, EMPO \cite{chen2024offline}, and MPDR \cite{yoon2023energy}. Our code is available at the anonymous link: \url{https://anonymous.4open.science/r/MC-ETM-Implementation-4CC9/}, which will be made public if the paper is accepted.

\begin{itemize}
    \item \textbf{Autoencoder ($f_e, f_d$):} The encoder and decoder consist of MLP architectures with 3 hidden layers and 64 hidden units per layer. We use ReLu for activation functions. The latent dimension $d_m$ is set to 5 for the hopper and 10 for the halfcheetah and walker2d tasks. The autoencoder is trained with the reconstruction error $l_r = \frac{1}{n}\sum_{i=1}^n( s'_i - f_d(f_e(s'_i)))^2$. 
    \item \textbf{Energy Function ($E_\theta$):} The energy function is parameterized as an MLP that takes the concatenation of $(s, a, s')$ as input and outputs a scalar energy value. It utilizes 4 hidden layers with 200 units and ReLu activations. The energy model is optimized with the InfoNCE loss defined in Eq.\eqref{eq:loss}. Following the configurations in EMPO~\cite{chen2024offline}, we add a gradient loss $\mathcal{L}_{g} = \sum_{i=0}^{N_s}\max(0, (\Vert \nabla_{s'}E_\theta(s,a,s_i^-) \Vert - G)^2)$ as a regularization term, in which $G$ is the gradient penalty margin as a hyperparameter.
    \item \textbf{Policy Iteration:} Following the base offline RL algorithm, we use SAC-style architectures for the actor $\pi_\psi$ and the twin-Q critics $Q_\phi$, with most hyperparameters associated with policy iteration following. Besides, we train another MLP-based reward model to predict the reward function based on $(s,a,s')$, while in practice $s'$ is the sample generated by the energy-based dynamics. Also in practice, we clip the next target Q to stay above 0, since in policy iteration we don't want the penalty to drop the Q-value extremely low to deteriorate the training stability.
\end{itemize}

All of the experiments in this paper are conducted on a server with an AMD EPYC 7513 32-Core Processor CPU and an NVIDIA RTX A6000 GPU. The training of a single energy model takes about 5 to 7 hours, and the inference time is about 5 to 6 times longer than that of MLP-based models. However, since the process of simulating rollout takes only about 10\% of the time cost in policy optimization, this time increase is acceptable. Besides, The usage of MPD doesn't see a significant increase in inference time compared to the standard ETM model.

\section{Hyperparameters}
We list the hyperparameters associated with energy-based transition model training in Table \ref{tab:hyper etm}, and those associated with policy optimization in Table \ref{tab:hyper policy}. Since we adopt SAC for policy optimization, most hyperparameters follow its standard implementations. Besides, two core hyperparameters in our proposed energy-based uncertainty penalization process, \emph{i.e.,} penalty coefficient $\lambda$ and the energy threshold $\delta$ are tuned over various environments and reported in Table ~\ref{tab:hyper env}.

\begin{table}[h]
    \centering
    \caption{Hyperparameters used in energy model training}
    \label{tab:hyper etm}
    \begin{tabular}{l c}
        \toprule
        \textbf{Hyperparameter} & \textbf{Value} \\
        \midrule
        Batch size              & 1024          \\
        Max epochs             & 100           \\
        Optimizer                         & Adam        \\
        Learning rate                     & 1e-3           \\
        Number of negative samples        & 20 \\
        Langevin inference step in manifold space          & 30          \\
        Langevin inference step in ambient space          & 20          \\
        Langevin inference noise          & 0.5         \\
        Langevin step size          & 1e-3         \\
        Langevin delta clip          & 0.5   \\
        Gradient penalty margin     & 5 \\

        \bottomrule
    \end{tabular}
\end{table}
\begin{table}[h]
    \centering
    \caption{Hyperparameters used in policy optimization}
    \label{tab:hyper policy}
    \begin{tabular}{l c}
        \toprule
        \textbf{Hyperparameter} & \textbf{Value} \\
        \midrule
        Total gradient steps              & 3M          \\
        Model ensemble number             & 5           \\
        Model inference sample number      & 10 \\
        Critic number                     & 2           \\
        Q network hidden layers                 & [256, 256]  \\
        policy network hidden layers            & [256, 256]  \\
        reward network hidden layers            & [256, 256]  \\
        Target Q smoothing coefficient    & 5e-3        \\
        Discount factor($\gamma$)                   & 0.99        \\
        Batch size                        & 256         \\
        Q learning rate                   & 3e-4        \\
        Actor learning rate               & 1e-4        \\
        Reward learning rate              & 1e-4        \\
        Optimizer                         & Adam        \\
        Ratio of real experiences         & 0.05        \\
        Langevin inference step in manifold space           & 30          \\
        Langevin inference step in ambient space           & 30          \\
        Rollout horizon & 5 \\
        Langevin step size                & 1e-3         \\
        Langevin inference noise          & 0.1         \\
        \bottomrule
    \end{tabular}
\end{table}

\begin{table}[h]
    \centering
    \caption{Placeholder Caption}
    \label{tab:hyper env}
    \begin{tabular}{l c c}
        \toprule
        \textbf{Task} & \textbf{Penalty Coefficient} & \textbf{Energy Threshold} \\
        \midrule
        hopper-random              & 0.5   & -2.5  \\
        hopper-medium              & 1.5   & -5  \\
        hopper-medium-replay       & 0.5   & -2.5  \\
        hopper-medium-expert       & 1.5   & 0  \\
        walker-random              & 2.5  & 5  \\
        walker-medium              & 1.0  & -5  \\
        walker-medium-replay       & 0.5  & 5  \\
        walker-medium-expert       & 1.0  & -10  \\
        halfcheetah-random         & 0.5  & -2.5  \\
        halfcheetah-medium         & 0.5  & 0  \\
        halfcheetah-medium-replay  & 0.5  & 2.5 \\
        halfcheetah-medium-expert  & 1.5  & -10  \\
        \bottomrule
    \end{tabular}
\end{table}

\section{Proofs of the main theorem}\label{app:theory}

\subsection{Preliminaries and Notation}\label{app:prelim}
We consider a discounted MDP $M=(\mathcal{S},\mathcal{A},P,r,\gamma,\rho_0)$ with $\gamma\in(0,1)$ and bounded rewards
$|r(s,a)|\le R_{\max}$. Let $V_{\max}\doteq \frac{R_{\max}}{1-\gamma}$.
For any stationary policy $\pi$, define the true Bellman operator
\begin{equation}
  (T^\pi Q)(s,a) \doteq r(s,a) + \gamma\,\mathbb{E}_{s'\sim P(\cdot|s,a),\,a'\sim\pi(\cdot|s')}\bigl[Q(s',a')\bigr],
\end{equation}
and let $Q^\pi$ denote the unique fixed point of $T^\pi$. Define the value function  $V^\pi(s)=\mathbb{E}_{a\sim\pi(\cdot|s)}[Q^\pi(s,a)]$.
We write $J(\pi)\doteq \mathbb{E}_{s_0\sim\rho_0}[V^\pi(s_0)]$ for the expected return in the true MDP.
MC-ETM uses an ensemble of learned dynamics models $\{\hat P_i\}_{i=1}^M$ and implicitly the averaged model
$\hat P \doteq \frac1M\sum_{i=1}^M \hat P_i$ for Bellman evaluation.
Let $\hat T^\pi$ denote the model Bellman operator under $\hat P$:
\begin{equation}
  (\hat T^\pi Q)(s,a) \doteq r(s,a) + \gamma\,\mathbb{E}_{s'\sim \hat P(\cdot|s,a),\,a'\sim\pi(\cdot|s')}\bigl[Q(s',a')\bigr].
\end{equation}

\paragraph{Energy-constrained OOD set and truncation.}
Following Definition~\ref{def:1}, we define the energy-constrained OOD set
\begin{equation}
  \mathcal{U}_\delta \doteq \Bigl\{(s,a)\in\mathcal{S}\times\mathcal{A}:\ \min_{s'}E_\theta(s,a,s')>\delta\Bigr\},\qquad
  \bar{\mathcal{U}}_\delta \doteq (\mathcal{S}\times\mathcal{A})\setminus\mathcal{U}_\delta .
  \label{eq:ood-set}
\end{equation}
MC-ETM additionally defines a virtual termination signal (Eq.~\eqref{nu})
\begin{equation}
  \nu(s,a,s') \doteq \mathbb{I}\!\left(E_\theta(s,a,s')>\delta\right).
  \label{eq:nu}
\end{equation}
Intuitively, $\mathcal{U}_\delta$ is the set of state-action pairs that are \emph{certifiably} OOD in the sense that even the
best-case next state has energy above threshold, while $\nu$ performs sample-wise truncation during rollouts.

\paragraph{Model-Bellman inconsistency / uncertainty quantifier.}
Let $u(s,a)$ denote an uncertainty score computed from the ensemble Bellman targets.
A concrete instance consistent with Eq.~\eqref{eq:target} is:
\begin{align}
  \bar q^{\pi}_{Q,\theta_i}(s,a)
  &\doteq \frac{1}{N}\sum_{j=1}^N \bigl(1-\nu(s,a,s'_{i,j})\bigr)\,Q\!\bigl(s'_{i,j},\pi(s'_{i,j})\bigr),
  \label{eq:qbar}\\
  u(s,a) &\doteq \mathrm{Std}_{i\in[M]}\!\Bigl[\bar q^{\pi}_{Q,\theta_i}(s,a)\Bigr],
  \label{eq:ubi}
\end{align}
where $s'_{i,j}$ are MC-ETM-generated next-state samples from the $i$-th model.
This specializes the \emph{Model-Bellman Inconsistency} idea of MOBILE~\cite{sun2023model} to the MC-ETM rollout mechanism, more precisely, the use of ensemble Bellman-target dispersion as an uncertainty quantifier.

\subsection{Hybrid pessimistic operator and assumption}\label{app:hybrid-op}
MC-ETM’s theoretical analysis can be expressed via a \emph{hybrid pessimistic Bellman operator} that combines
(i) soft penalization using $u(s,a)$ on $\bar{\mathcal{U}}_\delta$, and
(ii) hard truncation on $\mathcal{U}_\delta$.
Define, for any policy $\pi$ and any $Q$,
\begin{equation}
  (\hat T^{\pi}_{\mathrm{MC}} Q)(s,a)
  \doteq r(s,a) + \gamma\,\mathbb{I}\!\bigl((s,a)\in \bar{\mathcal{U}}_\delta\bigr)\,
  \mathbb{E}_{s'\sim \hat P(\cdot|s,a),\,a'\sim\pi(\cdot|s')}\bigl[Q(s',a')\bigr],
  \label{eq:hybrid-bellman}
\end{equation}
which matches Eq.~\eqref{eq:13} in the main text (hard truncation outside $\bar{\mathcal{U}}_\delta$).
We further define the \emph{penalized} operator used for pessimistic value estimation:
\begin{equation}
  (\hat T^{\pi}_{\mathrm{MC},\beta} Q)(s,a)
  \doteq r(s,a) - \beta\,u(s,a)\,\mathbb{I}\!\bigl((s,a)\in \bar{\mathcal{U}}_\delta\bigr)
  + \gamma\,\mathbb{I}\!\bigl((s,a)\in \bar{\mathcal{U}}_\delta\bigr)\,
  \mathbb{E}_{\hat P,\pi}\bigl[Q(s',a')\bigr].
  \label{eq:hybrid-penalized}
\end{equation}
Let $\hat Q^{\pi}_{\mathrm{MC},\beta}$ denote the fixed point of $\hat T^{\pi}_{\mathrm{MC},\beta}$
(and $\hat V^{\pi}_{\mathrm{MC},\beta}$ the induced value).

\paragraph{Assumption (Manifold-Constrained Uncertainty Consistency).}
Assumption~\ref{ass:consistency} states that, restricted to the low-energy set $\bar{\mathcal{U}}_\delta$, the uncertainty score
$u(s,a)$ upper-bounds the \emph{model Bellman error}:
\begin{equation}
  \bigl|(\hat T^\pi Q)(s,a) - (T^\pi Q)(s,a)\bigr|
  \le \beta\,u(s,a),\qquad \forall (s,a)\in \bar{\mathcal{U}}_\delta.
  \label{eq:assump-local}
\end{equation}
Compared to MOBILE’s global calibration requirement, Eq. \eqref{eq:assump-local} only needs to hold on the manifold/low-energy region.

\subsection{MOReL-style pessimistic MDP view}\label{app:pm}
We next connect Eq.~\eqref{eq:hybrid-bellman} to the pessimistic-MDP construction used in MOReL~\cite{kidambi2020morel}.
Define an augmented MDP $\hat M_{\mathrm{MC}}$ by adding an absorbing terminal state $\bot$ with $r(\bot,\cdot)=0$ and
$P(\bot|\bot,\cdot)=1$. Its transition kernel is
\begin{equation}
  \hat P_{\mathrm{MC}}(s'|s,a) \doteq
  \begin{cases}
    \hat P(s'|s,a), & (s,a)\in\bar{\mathcal{U}}_\delta,\ s'\in\mathcal{S},\\
    1, & (s,a)\in\mathcal{U}_\delta,\ s'=\bot,\\
    0, & (s,a)\in\mathcal{U}_\delta,\ s'\in\mathcal{S},
  \end{cases}
  \label{eq:pm-kernel}
\end{equation}
with reward $r_{\mathrm{MC}}(s,a)=r(s,a)$ for $s\in\mathcal{S}$ (and $0$ at $\bot$).
Then the standard Bellman operator for $\hat M_{\mathrm{MC}}$ coincides with \eqref{eq:hybrid-bellman}.
Thus, MC-ETM’s truncation is equivalently planning in a MOReL-style pessimistic MDP where entering $\mathcal{U}_\delta$
forces absorption (in MOReL this absorption is often paired with a large negative reward; here MC-ETM uses termination,
which we bound via the value range).

\subsection{Main technical lemmas}\label{app:lemmas}

\begin{lemma}[Pessimism on the manifold]\label{lem:pessimism-known}
Under Assumption~\eqref{eq:assump-local}, for any policy $\pi$, any bounded $Q$, and any $(s,a)\in\bar{\mathcal{U}}_\delta$,
\begin{equation}
  (\hat T^{\pi}_{\mathrm{MC},\beta} Q)(s,a) \le (T^\pi Q)(s,a).
  \label{eq:pessimism-known}
\end{equation}
\end{lemma}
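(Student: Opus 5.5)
On $\bar{\mathcal{U}}_\delta$ both indicators in \eqref{eq:hybrid-penalized} equal one, so the proof reduces to a single-line comparison against the true operator. Fix a policy $\pi$, a bounded $Q$, and a pair $(s,a)\in\bar{\mathcal{U}}_\delta$. The first step is to substitute $\mathbb{I}((s,a)\in\bar{\mathcal{U}}_\delta)=1$ into \eqref{eq:hybrid-penalized}. This gives
\begin{equation*}
  (\hat T^{\pi}_{\mathrm{MC},\beta} Q)(s,a) = r(s,a) - \beta u(s,a) + \gamma\,\mathbb{E}_{s'\sim\hat P(\cdot|s,a),\,a'\sim\pi(\cdot|s')}\bigl[Q(s',a')\bigr] = (\hat T^\pi Q)(s,a) - \beta u(s,a).
\end{equation*}
This identifies the penalized hybrid operator with the plain model Bellman operator under the averaged model $\hat P$, shifted down by the penalty. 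Boundedness of $Q$ guarantees that the expectation is finite, so the rearrangement is legitimate.

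The second step applies Assumption~\eqref{eq:assump-local}, which is valid exactly because $(s,a)\in\bar{\mathcal{U}}_\delta$. Dropping the absolute value in the one-sided direction gives $(\hat T^\pi Q)(s,a) \le (T^\pi Q)(s,a) + \beta u(s,a)$. Subtracting $\beta u(s,a)$ from both sides and combining with the identity above yields \eqref{eq:pessimism-known}.

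The algebra poses no real obstacle. The only delicate point is interpretive: the assumption must be read as holding for the particular $\pi$ and $Q$ under consideration, and $u(s,a)$ must be the same quantity that appears in the operator. In the paper $u$ is computed from a specific $Q$ via \eqref{eq:qbar}--\eqref{eq:ubi}, while the lemma is stated for any bounded $Q$. I would therefore state explicitly that the assumption is taken uniformly over the bounded $Q$ (and $\pi$) to which the lemma is applied, with $u$ evaluated for that same $Q$. A secondary point is that the expectation in \eqref{eq:hybrid-penalized} is written without the sample-wise truncation $\nu$, so it coincides exactly with $\hat T^\pi$. I would note that this identification is by definition, and that any discrepancy caused by $\nu$ is absorbed into the modelling assumption rather than into this lemma.
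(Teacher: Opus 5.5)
Your proof is correct and matches the paper's argument: on $\bar{\mathcal{U}}_\delta$ the penalized operator reduces to $(\hat T^\pi Q)(s,a)-\beta u(s,a)$, and the one-sided form of Assumption~\eqref{eq:assump-local} then gives the claim. Your remark that the assumption must be read as holding for the same $Q$ and $\pi$ used to compute $u$ is a sensible clarification that the paper leaves implicit.
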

\begin{proof}
Fix $(s,a)\in\bar{\mathcal{U}}_\delta$. By definition \eqref{eq:hybrid-penalized},
\[
(\hat T^{\pi}_{\mathrm{MC},\beta} Q)(s,a)
= r(s,a) + \gamma\,\mathbb{E}_{\hat P,\pi}[Q(s',a')] - \beta u(s,a).
\]
Also $(\hat T^\pi Q)(s,a)= r(s,a) + \gamma\,\mathbb{E}_{\hat P,\pi}[Q(s',a')]$ and $(T^\pi Q)(s,a)= r(s,a) + \gamma\,\mathbb{E}_{P,\pi}[Q(s',a')]$.
Rearranging Assumption~\eqref{eq:assump-local} gives
$(\hat T^\pi Q)(s,a) - \beta u(s,a) \le (T^\pi Q)(s,a)$, which is exactly \eqref{eq:pessimism-known}. 
\end{proof}

\begin{lemma}[Truncation loss bound via hitting time]\label{lem:truncate-risk}
Let $\tau_{\mathcal{U}}\doteq \inf\{t\ge 0:(s_t,a_t)\in\mathcal{U}_\delta\}$ be the first hitting time of $\mathcal{U}_\delta$
under policy $\pi$ in the \emph{true} MDP $M$ (with $\tau_{\mathcal{U}}=\infty$ if never hit).
Then for any initial state $s$,
\begin{equation}
  0 \le V^\pi(s) - \hat V^{\pi}_{\mathrm{MC}}(s) \le V_{\max}\,\mathbb{P}\!\left(\tau_{\mathcal{U}}<\infty \mid s_0=s\right),
  \label{eq:truncate-risk}
\end{equation}
where $\hat V^{\pi}_{\mathrm{MC}}$ is the value of $\pi$ in the truncated MDP $\hat M_{\mathrm{MC}}$ (without the soft penalty).
\end{lemma}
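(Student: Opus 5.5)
The plan is a coupling argument. We run the true MDP $M$ and the truncated MDP $\hat M_{\mathrm{MC}}$ from the same initial state $s$ under the same policy $\pi$, and compare them path by path.

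Two conventions have to be fixed first.
\begin{itemize}
\item The lemma compares $\hat V^{\pi}_{\mathrm{MC}}$ with the true value $V^\pi$ and refers to hitting times in the true MDP. It should therefore isolate the \emph{truncation} effect only. I would read $\hat M_{\mathrm{MC}}$ here as the kernel \eqref{eq:pm-kernel} with $\hat P$ replaced by the true $P$ on $\bar{\mathcal{U}}_\delta$. Equivalently, the model error on $\bar{\mathcal{U}}_\delta$ is handled separately by Lemma~\ref{lem:pessimism-known}, and is then combined with this lemma in the proof of Theorem~\ref{thm:bound}.
\item The lower bound $0\le V^\pi-\hat V^\pi_{\mathrm{MC}}$ needs nonnegative rewards. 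I would assume $r\in[0,R_{\max}]$, which can be obtained by the usual shift of rewards; without it only the absolute-value version of the upper bound survives.
\end{itemize}

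\textbf{Step 1 (coupling).} Under the common kernel $P$ on $\bar{\mathcal{U}}_\delta$ and the common policy $\pi$, the two processes can be constructed on one probability space. Their trajectories $(s_t,a_t)$ coincide for all $t\le \tau_{\mathcal{U}}$. At time $\tau_{\mathcal{U}}$ both collect $r(s_{\tau},a_{\tau})$, since $r_{\mathrm{MC}}=r$ on $\mathcal{S}$. From $t=\tau_{\mathcal{U}}+1$ on, the truncated process sits in $\bot$ and earns reward $0$. Because the laws of the joint trajectories agree up to $\tau_{\mathcal{U}}$, the event $\{\tau_{\mathcal{U}}<\infty\}$ has the same probability under both processes.

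\textbf{Step 2 (value difference).} Taking expectations of the discounted returns gives
\[
V^\pi(s)-\hat V^\pi_{\mathrm{MC}}(s)=\mathbb{E}\Bigl[\mathbb{I}(\tau_{\mathcal{U}}<\infty)\sum_{t>\tau_{\mathcal{U}}}\gamma^t r(s_t,a_t)\,\Big|\,s_0=s\Bigr].
\]
Nonnegativity of $r$ then gives the lower bound $0$. For the upper bound, the inner sum is at most $\gamma^{\tau_{\mathcal{U}}+1}R_{\max}/(1-\gamma)\le V_{\max}$, which yields $V_{\max}\,\mathbb{P}(\tau_{\mathcal{U}}<\infty\mid s_0=s)$. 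A slightly sharper form, $\gamma V_{\max}\,\mathbb{E}[\gamma^{\tau_{\mathcal{U}}}\mathbb{I}(\tau_{\mathcal{U}}<\infty)]$, comes for free.

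\textbf{Main obstacle.} The only delicate point is justifying Step 1, namely identifying the truncated MDP's law with the true law before absorption. Without it, $\hat V^\pi_{\mathrm{MC}}$ also contains model error, and the clean bound fails.

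To make Step 1 rigorous, I would use a strong-Markov/optional-stopping argument, or equivalently a telescoping over the stopped process $t\wedge\tau_{\mathcal{U}}$. Dominated convergence (all returns are bounded by $V_{\max}$) justifies exchanging the expectation with the infinite sum.
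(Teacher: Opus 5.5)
Your argument is the paper's own coupling proof: trajectories coincide through $\tau_{\mathcal U}$, and truncation discards only the tail $\sum_{t>\tau_{\mathcal U}}\gamma^t r(s_t,a_t)$, which is at most $V_{\max}$ in absolute value. Your two caveats are exactly what the paper's one-line proof silently needs. First, with the learned $\hat P$ on $\bar{\mathcal U}_\delta$, as literally in \eqref{eq:pm-kernel}, no such coupling exists and the statement fails in general: take $\mathcal U_\delta=\emptyset$, so the right-hand side vanishes while $\hat V^\pi_{\mathrm{MC}}$ is just the value under $\hat P$. Second, the paper's ``absolute contribution'' step only yields $|V^\pi(s)-\hat V^\pi_{\mathrm{MC}}(s)|\le V_{\max}\,\mathbb{P}(\tau_{\mathcal U}<\infty\mid s_0=s)$, so the lower bound $0$ genuinely needs $r\ge 0$.

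The one thing to drop is the appeal to a reward shift. Absorption into the zero-reward state $\bot$ is not shift-invariant: shifting $r$ by $c$ changes $V^\pi-\hat V^\pi_{\mathrm{MC}}$ by $\frac{c}{1-\gamma}\mathbb{E}\bigl[\mathbb{I}(\tau_{\mathcal U}<\infty)\gamma^{\tau_{\mathcal U}+1}\bigr]$. So $r\ge 0$ must be imposed as an assumption on the original rewards; it cannot be obtained without loss of generality.
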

\begin{proof}
Couple the trajectories of $M$ and $\hat M_{\mathrm{MC}}$ so they agree up to time $\tau_{\mathcal{U}}-1$,
and in $\hat M_{\mathrm{MC}}$ the process transitions to $\bot$ at time $\tau_{\mathcal{U}}$.
On the event $\{\tau_{\mathcal{U}}=\infty\}$, the truncation never triggers and the two returns coincide.
On the event $\{\tau_{\mathcal{U}}<\infty\}$, the truncated return drops all rewards after $\tau_{\mathcal{U}}$,
whose absolute contribution is at most $V_{\max}$ by reward boundedness.
Taking expectations yields \eqref{eq:truncate-risk}. 
\end{proof}

\subsection{Proof of Theorem~4.3}\label{app:proof-main}
We now give a complete proof of the performance bound stated in Theorem~4.3.
For clarity, we state the theorem in the appendix notation.

\begin{theorem}[Energy-Constrained Performance Bound (restated)]\label{thm:main}
Assume $|r(s,a)|\le R_{\max}$ and Assumption~\eqref{eq:assump-local} holds on $\bar{\mathcal{U}}_\delta$.
Let $\hat\pi$ be a policy obtained by (approximately) optimizing the pessimistic model defined by \eqref{eq:hybrid-penalized},
and let $\pi^\star$ be an optimal policy in the true MDP.
Then, for any rollout horizon $H\in\mathbb{N}\cup\{\infty\}$,
\begin{equation}
  J(\pi^\star)-J(\hat\pi)
  \le
  2\sum_{t=0}^{H}\gamma^t\,\mathbb{E}_{\pi^\star}\!\Bigl[\beta\,u(s_t,a_t)\,\mathbb{I}\!\bigl((s_t,a_t)\in\bar{\mathcal{U}}_\delta\bigr)\Bigr]
  + \frac{2R_{\max}}{1-\gamma}\,
  \mathbb{P}_{\pi^\star}\!\Bigl(\exists t\le H:\ (s_t,a_t)\in \mathcal{U}_\delta\Bigr).
  \label{eq:main-bound}
\end{equation}
\end{theorem}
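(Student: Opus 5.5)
The proof follows the standard pessimism template (MOBILE/MOReL style). Write $\hat V \doteq \hat V^{\pi}_{\mathrm{MC},\beta}$ for the value of the penalized hybrid model of \eqref{eq:hybrid-penalized}, and split
\[
J(\pi^\star)-J(\hat\pi)=\underbrace{\bigl(J(\pi^\star)-\hat J(\pi^\star)\bigr)}_{(A)}+\underbrace{\bigl(\hat J(\pi^\star)-\hat J(\hat\pi)\bigr)}_{(B)}+\underbrace{\bigl(\hat J(\hat\pi)-J(\hat\pi)\bigr)}_{(C)},
\]
where $\hat J(\pi)=\mathbb{E}_{\rho_0}[\hat V^{\pi}_{\mathrm{MC},\beta}]$. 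Term (B) is $\le 0$ (or an optimization error, to be dropped or absorbed) because $\hat\pi$ optimizes the pessimistic model.

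For (C), pessimism should give $\hat J(\hat\pi)\le J(\hat\pi)$. On $\bar{\mathcal U}_\delta$, Lemma~\ref{lem:pessimism-known} gives $\hat T^{\pi}_{\mathrm{MC},\beta}Q\le T^\pi Q$. On $\mathcal U_\delta$, the operator returns $r(s,a)$, which need not be dominated by $T^\pi Q$ when rewards can be negative. I would handle this by shifting rewards to $[0,2R_{\max}]$ (the reason a factor $2R_{\max}$ appears), or equivalently by comparing $\gamma\,\mathbb{E}[Q]$ with $0$ and charging the discrepancy to truncation. Either way, iterating the monotone contraction gives $\hat Q^{\hat\pi}\le Q^{\hat\pi}$, up to a truncation term of the same form as below. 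If the shift is unavailable, I would instead bound (C) symmetrically with the same decomposition used for (A), which is why the final constant is $2$.

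For (A), I would use the simulation lemma along trajectories of $\pi^\star$ in the true MDP:
\[
J(\pi^\star)-\hat J(\pi^\star)=\sum_{t\ge0}\gamma^t\,\mathbb{E}_{\pi^\star}\bigl[(T^{\pi^\star}\hat Q-\hat T^{\pi^\star}_{\mathrm{MC},\beta}\hat Q)(s_t,a_t)\bigr],
\]
and stop the sum at $\tau_{\mathcal U}\wedge H$. Before the hitting time, $(s_t,a_t)\in\bar{\mathcal U}_\delta$, and each summand is at most $|T\hat Q-\hat T\hat Q|+\beta u\le 2\beta u$ by Assumption~\ref{ass:consistency}. This is exactly where the factor $2$ and the indicator $\mathbb{I}((s_t,a_t)\in\bar{\mathcal U}_\delta)$ come from. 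After the hitting time (or after $H$), I would not sum per-step errors. Instead, I would bound the entire remaining value gap by the value range, as in Lemma~\ref{lem:truncate-risk}: on $\{\tau_{\mathcal U}\le H\}$ the residual is at most $2V_{\max}$, since both values lie in $[-V_{\max},V_{\max}]$. This yields $\frac{2R_{\max}}{1-\gamma}\mathbb{P}_{\pi^\star}(\exists t\le H:(s_t,a_t)\in\mathcal U_\delta)$.

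The main obstacle is making this stopping-time split rigorous. I would write the telescoping identity up to the stopping time $\tau\wedge H$ via optional stopping/Dynkin, giving $V-\hat V=\mathbb{E}[\sum_{t<\tau\wedge H}\gamma^t\Delta_t]+\mathbb{E}[\gamma^{\tau\wedge H}(V-\hat V)(s_{\tau\wedge H})]$, and bound the boundary term by $2V_{\max}\mathbb{P}(\tau\le H)$. There is one gap when $H<\infty$ and $\tau>H$: the tail beyond $H$ is not covered by the stated bound. I would treat $H$ as the effective horizon (so the tail is either included by taking $H=\infty$ or absorbed into the model's rollout truncation) and state this convention explicitly.
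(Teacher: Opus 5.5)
The genuine gap is your term (C). None of your three remedies closes it.

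The reward shift is not without loss of generality. Truncation sends the model into a zero-reward absorbing state, so adding $R_{\max}$ to every reward raises a policy's model value by $R_{\max}$ times its expected discounted time before absorption. That quantity depends on the policy, so the shift changes the maximizer of $\hat J$. The $\hat\pi$ of the theorem then no longer optimizes the shifted model, and (B) fails.

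The other two options fail for a different reason. Charging the sign discrepancy to truncation, or bounding (C) symmetrically as in (A), produces terms under the occupancy of $\hat\pi$, e.g.\ $\mathbb{P}_{\hat\pi}(\tau_{\mathcal U}<\infty)$ or $\sum_t\gamma^t\mathbb{E}_{\hat\pi}[\beta u(s_t,a_t)]$. The statement contains no such terms. Also, the factor $2$ is already produced inside (A), so the symmetric route does not explain it.

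In fact, with signed rewards the statement is false. Let $(s_0,a_1)\in\bar{\mathcal U}_\delta$ lead to a zero-reward absorbing state, with an exact model there and $u\equiv 0$. Let $(s_0,a_2)\in\mathcal U_\delta$ have $r(s_0,a_2)=\epsilon>0$ but lead, in the true MDP, to an absorbing state with reward $-R_{\max}$. Then $\pi^\star$ plays $a_1$ and never meets $\mathcal U_\delta$, so the right-hand side is $0$. The truncated model values $a_2$ at $\epsilon$ and $a_1$ at $0$, so $\hat\pi$ plays $a_2$, and $J(\pi^\star)-J(\hat\pi)=\gamma R_{\max}/(1-\gamma)-\epsilon>0$ for small $\epsilon$.

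What is missing is an explicit hypothesis $0\le r\le R_{\max}$. The paper's Step~1 uses it tacitly when it places both backups on $\mathcal U_\delta$ in an interval of length $V_{\max}$. With this hypothesis, (C) $\le 0$ follows from your own stopped identity, run along $\hat\pi$ in the true MDP with $\hat Q=\hat Q^{\hat\pi}_{\mathrm{MC},\beta}$:
\begin{itemize}
\item For $t<\tau_{\mathcal U}$, each summand is $(\hat T^{\hat\pi}\hat Q-T^{\hat\pi}\hat Q)(s_t,a_t)-\beta u(s_t,a_t)\le 0$ by Assumption~\ref{ass:consistency}.
\item The boundary term is $(\hat Q-Q^{\hat\pi})(s_\tau,a_\tau)=-\gamma\,\mathbb{E}_{s'\sim P(\cdot\mid s_\tau,a_\tau)}[V^{\hat\pi}(s')]\le 0$, since $V^{\hat\pi}\ge 0$.
\end{itemize}
Equivalently, iterate the monotone operator starting from $Q^{\hat\pi}\ge 0$.

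Apart from this, your route is the PEVI decomposition that the paper invokes as a black box. Your stopping-time treatment of (A) is more careful than the paper's Step~3, which collapses $\sum_t\gamma^t\mathbb{I}((s_t,a_t)\in\mathcal U_\delta)$ to a single hit by appealing to model-side truncation. That appeal does not work there, because the expectation is over untruncated true-MDP trajectories of $\pi^\star$.

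One small repair: take the boundary term at the pair $(s_\tau,a_\tau)$, where $\hat Q^{\pi^\star}_{\mathrm{MC},\beta}(s_\tau,a_\tau)=r(s_\tau,a_\tau)$. It then equals $\gamma\,\mathbb{E}_{s'\sim P(\cdot\mid s_\tau,a_\tau)}[V^{\pi^\star}(s')]\le V_{\max}$. The penalized value need not lie in $[-V_{\max},V_{\max}]$, but you never need that. Your finite-$H$ caveat is legitimate; the paper's proof does not address it either.
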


\begin{proof}
The argument follows the PEVI-style decomposition used in MOBILE, combined with a MOReL-style pessimistic MDP
that absorbs on ``unknown'' state-action pairs.

\paragraph{Step 1: Construct a valid (local) uncertainty quantifier.}
Define the penalty function
\begin{equation}
  \Gamma(s,a)\ \doteq\ \beta\,u(s,a)\,\mathbb{I}\!\bigl((s,a)\in\bar{\mathcal{U}}_\delta\bigr)\ +\ V_{\max}\,\mathbb{I}\!\bigl((s,a)\in\mathcal{U}_\delta\bigr).
  \label{eq:Gamma}
\end{equation}
On $\bar{\mathcal{U}}_\delta$, Assumption~\eqref{eq:assump-local} implies
$\bigl|(\hat T^\pi Q)(s,a)-(T^\pi Q)(s,a)\bigr|\le \Gamma(s,a)$.
On $\mathcal{U}_\delta$, we use a trivial bound:
since both $(\hat T^\pi Q)(s,a)$ and $(T^\pi Q)(s,a)$ lie in an interval of length at most $V_{\max}$
(the immediate reward plus a discounted value bounded by $V_{\max}$), we have
$\bigl|(\hat T^\pi Q)(s,a)-(T^\pi Q)(s,a)\bigr|\le V_{\max}=\Gamma(s,a)$.
Therefore $\Gamma$ is a valid Bellman-error upper bound everywhere, while being \emph{data-dependent} (via $u$)
only on $\bar{\mathcal{U}}_\delta$.

\paragraph{Step 2: PEVI/MOBILE-style pessimistic planning bound on the safe region.}
Consider pessimistic planning that evaluates a policy via the operator
$(\hat T^\pi Q)(s,a)-\Gamma(s,a)$.
Standard PEVI analyses (as adopted in MOBILE) imply that the suboptimality of the policy output by pessimistic planning
is bounded by twice the expected cumulative penalty along an optimal trajectory:
\begin{equation}
  J(\pi^\star)-J(\hat\pi)
  \le 2\,\mathbb{E}_{\pi^\star}\!\Bigl[\sum_{t=0}^{H}\gamma^t\,\Gamma(s_t,a_t)\Bigr].
  \label{eq:pevi-style}
\end{equation}
(Informally: the pessimistic value estimates are lower bounds on the true value; the factor $2$ arises from the
telescoping-sum argument bounding $V^{\pi^\star}-V^{\hat\pi}$ by cumulative Bellman estimation gaps, as in PEVI/MOBILE.)

\paragraph{Step 3: Replace the $\mathcal{U}_\delta$ penalty by a hitting-probability term (MOReL-style).}
Let $\tau_{\mathcal{U}}=\inf\{t\ge 0:(s_t,a_t)\in\mathcal{U}_\delta\}$ under $\pi^\star$.
Because MC-ETM truncates rollouts once $\mathcal{U}_\delta$ is encountered, $\mathcal{U}_\delta$ can be visited at most once
along a model rollout; consequently,
\begin{equation}
  \sum_{t=0}^{H}\gamma^t\,\mathbb{I}\!\bigl((s_t,a_t)\in\mathcal{U}_\delta\bigr)
  = \gamma^{\tau_{\mathcal{U}}}\,\mathbb{I}(\tau_{\mathcal{U}}\le H)
  \le \mathbb{I}(\tau_{\mathcal{U}}\le H).
  \label{eq:hit-ineq}
\end{equation}
Thus,
\begin{equation}
  \mathbb{E}_{\pi^\star}\!\Bigl[\sum_{t=0}^{H}\gamma^t\,V_{\max}\,\mathbb{I}\!\bigl((s_t,a_t)\in\mathcal{U}_\delta\bigr)\Bigr]
  \le V_{\max}\,\mathbb{P}_{\pi^\star}\!\bigl(\tau_{\mathcal{U}}\le H\bigr)
  = \frac{R_{\max}}{1-\gamma}\,\mathbb{P}_{\pi^\star}\!\Bigl(\exists t\le H:\ (s_t,a_t)\in\mathcal{U}_\delta\Bigr).
  \label{eq:truncate-term}
\end{equation}
This is the same MOReL-type dependence on the probability of entering the ``unknown'' set (equivalently, the hitting time).

\paragraph{Step 4: Combine terms.}
Plugging $\Gamma$ from \eqref{eq:Gamma} into \eqref{eq:pevi-style} and then using \eqref{eq:truncate-term} yields
\[
J(\pi^\star)-J(\hat\pi)
\le 2\sum_{t=0}^{H}\gamma^t\,\mathbb{E}_{\pi^\star}\!\Bigl[\beta u(s_t,a_t)\mathbb{I}\!\bigl((s_t,a_t)\in\bar{\mathcal{U}}_\delta\bigr)\Bigr]
+ \frac{2R_{\max}}{1-\gamma}\,\mathbb{P}_{\pi^\star}\!\Bigl(\exists t\le H:\ (s_t,a_t)\in\mathcal{U}_\delta\Bigr),
\]
which is exactly \eqref{eq:main-bound}.
\end{proof}

\subsection{Summary on two scenarios}\label{app:two-scenarios}

\paragraph{Scenario I: No OOD encountered.}
If $\mathbb{P}_{\pi^\star}(\exists t\le H:(s_t,a_t)\in\mathcal{U}_\delta)=0$, then the truncation term vanishes and
Theorem~\ref{thm:main} reduces to a pure MOBILE/PEVI-style bound:
\begin{equation}
  J(\pi^\star)-J(\hat\pi)
  \le
  2\sum_{t=0}^{H}\gamma^t\,\mathbb{E}_{\pi^\star}\!\Bigl[\beta\,u(s_t,a_t)\Bigr],
\end{equation}
i.e., the suboptimality is controlled by the (manifold-valid) Model-Bellman inconsistency penalty.

\paragraph{Scenario II: OOD encountered.}
When $\mathbb{P}_{\pi^\star}(\exists t\le H:(s_t,a_t)\in\mathcal{U}_\delta)>0$, we recover the MOReL-type dependence on
the hitting probability of the ``unknown'' set: the second term in \eqref{eq:main-bound} bounds the worst-case loss incurred by
forced termination (absorption) upon hitting $\mathcal{U}_\delta$.
This prevents the bound from depending on potentially unbounded/erratic $u(s,a)$ values in high-energy regions, replacing them
with a uniformly bounded risk term.


\section{Didactic Experiment Setup}

We detail the data generation process and the underlying transition dynamics for the illustrative toy experiment presented in the main text. The environment is designed to evaluate the capacity of various transition models to handle discontinuous, multi-modal, and non-smooth dynamics.

We consider a continuous one-dimensional state space $\mathcal{S} \subset \mathbb{R}$ and a one-dimensional action space $\mathcal{A} \subset \mathbb{R}$. The domain of interest for visualization and evaluation is bounded within the square region $(s, a) \in [-1, 1] \times [-1, 1]$.
The true transition function, $f: \mathcal{S} \times \mathcal{A} \rightarrow \mathcal{S}$, is constructed as a piecewise function composed of a sinusoidal wave, constant plateaus, and sharp discontinuities. The deterministic next state $s'_{clean}$ is governed by the following logic:

\begin{equation}
    s'_0 = f(s, a) = 
    \begin{cases} 
    \sin(-a) + 1.0 & \text{if } |s| < 0.5 \land |a| < 0.5 \\
    1.0 & \text{if } s \ge 0.5 \\
    -1.0 & \text{if } s \le -0.5 \\
    0.0 & \text{otherwise}
    \end{cases}
\label{eq:toy_dynamics}
\end{equation}

This transition function creates discontinuities near the section where either $|s|$ or $|a|$ equals $0.5$, which are particularly challenging for smooth regressors (\emph{e.g.}, standard MLPs) to fit without over-smoothing.

To simulate stochasticity, we add Gaussian noise to the training dataset:
\begin{equation}
    s' = s'_0 + \epsilon, \quad \epsilon \sim \mathcal{N}(0, \sigma^2)
\end{equation}
where $\sigma=0.05$ represents the inherent noise standard deviation.

We construct separate training and validation datasets to simulate the challenge of offline RL, where the training data distribution may not fully cover the evaluation manifold.

\textbf{Training Data Generation:}
The training dataset $\mathcal{D}_{train}$ consists of $N_{train}=1e5$ tuples $(s, a, s')$. The states and actions are sampled independently from a standard normal distribution:
\begin{equation}
    s \sim \mathcal{N}(0, 1), \quad a \sim \mathcal{N}(0, 1)
\end{equation}
This sampling strategy naturally concentrates data near the origin $(0,0)$ and leaves the boundaries of the domain (e.g., corners like $s=1, a=1$) sparsely populated. This setup implicitly tests the model's ability to generalize to valid but low-probability regions.

\textbf{Evaluation Data Generation:}
To visualize the learned energy landscapes and transition surfaces completely, the validation set is generated using a uniform grid over the domain:
\begin{equation}
    s_{val}, a_{val} \in \text{UniformGrid}([-1, 1], [-1, 1])
\end{equation}
This ensures that evaluation metrics cover the sharp discontinuities and plateaus equally, even if they are underrepresented in the training distribution.

\section{Additional Experimental Results}

\label{app:additional_experiments}

We provide a detailed quantitative analysis of MC-ETM based on the ablation studies discussed in Section 5.3 of the main text. Our goal is to empirically validate three core hypotheses: (1) MC-ETM yields superior prediction fidelity compared to standard ETM and MLP regressors; (2) the learned energy function serves as a calibrated proxy for model error; and (3) both the manifold constraint and the hybrid truncation-penalization mechanism are essential for robust offline policy optimization.

\subsection{Dynamics Model Prediction Error}
To assess the quality of the learned dynamics independent of the policy, we evaluate the prediction accuracy of MC-ETM against baseline transition models. We report the error as the absolute value between the predicted and ground truth next states across various environments. To further evaluate the generalizability of the prediction model, we cross-verify these models on datasets collected by different behavior policies within the same environment. The results are shown in Table~\ref{tab:cross}, with the lowest results marked in bold. Among them, MC-ETM utilizes the low-dimensional manifold structure of the datasets to further reduce the prediction error compared with standard ETM, highlighting the importance of the MPD mechanism.

\subsection{Energy-Error Correlation Analysis}
A critical premise of our method is that the learned energy $E_\theta(s,a,s')$ is a valid signal for Out-of-Distribution (OOD) detection. High energy should correspond to a high prediction error. To verify this, we compute the Pearson Correlation Coefficient (PCC) between the terminal energy value after inference and the ground-truth prediction error on a held-out test set in the halfcheetah-medium dataset, which contains both in-distribution samples and OOD samples injected with noise.

\begin{figure}[h]
    \centering
    \includegraphics[width=0.9\textwidth]{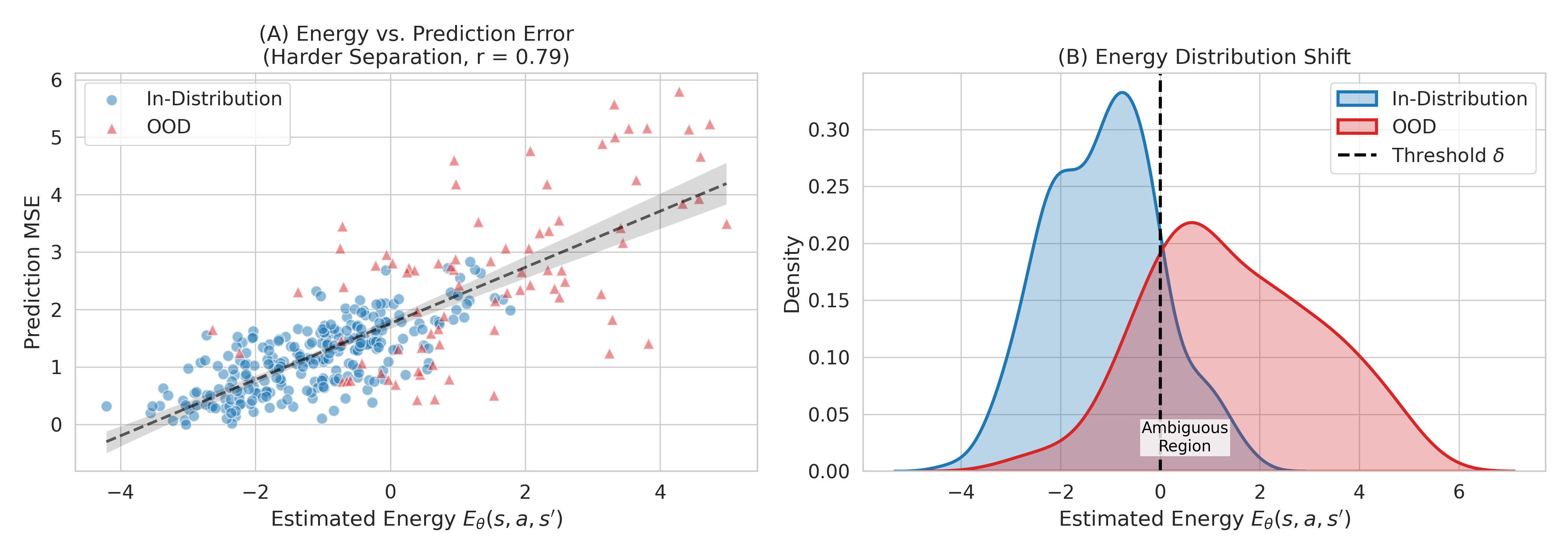}
    \caption{Correlation between MC-ETM energy values and prediction MSE. (Left) Scatter plot showing a positive correlation; higher energy implies higher model error. (Right) Histogram of energy values for In-Distribution (ID) vs. Out-of-Distribution (OOD) samples, showing a clear separation that enables effective truncation. The threshold $\delta$ is set to 0 in this case.}
    \label{fig:energy_correlation}
\end{figure}

The results, visualized in Figure~\ref{fig:energy_correlation}, demonstrate a strong positive correlation ($r = 0.79$) between energy and error. Furthermore, the energy distribution of OOD samples is distinct from that of ID samples, validating the use of the hard energy threshold $\delta$ in Eq.~(10) to truncate unreliable rollouts.

\subsection{Ablation Studies on Policy Optimization}
We dissect the contribution of each component in the MC-ETM policy optimization framework. We compare the full MC-ETM algorithm against three ablations on medium datasets from the hopper, halfcheetah and walker2d environments in D4RL MuJoCo tasks.:
\begin{enumerate}
    \item \textbf{No Manifold Constraint (w/o MPD):} Replaces MC-ETM with a standard ETM trained using Gaussian noise negatives.
    \item \textbf{No Truncation (w/o $\nu$):} Removes the hard rollout truncation, relying solely on the soft uncertainty penalty.
    \item \textbf{No Penalty (w/o $\lambda$):} Sets $\lambda=0$, removing the epistemic uncertainty penalty and relying solely on truncation.
\end{enumerate}

\begin{table}[h]
    \centering
    \caption{Ablation study normalized returns on medium datasets from the hopper, halfcheetah and walker2d environments in D4RL MuJoCo tasks. The degradation in performance across all ablations confirms that both accurate manifold modeling and the hybrid pessimistic mechanism are necessary.}
    \label{tab:ablation_results}
    \begin{tabular}{lcccc}
        \toprule
        \textbf{Task} & \textbf{Full MC-ETM} & \textbf{w/o MPD} & \textbf{w/o Truncation} & \textbf{w/o Penalty} \\
        \midrule
        hopper-m       & $\mathbf{107.0} \pm 1.1$ & $98.7 \pm 3.9$ & $93.5 \pm 5.9$ & $69.3 \pm 21.1$ \\
        walker2d-m     & $\mathbf{92.7} \pm 0.7$ & $86.7 \pm 1.1$ & $89.5 \pm 2.3$ & $66.0 \pm 11.7$ \\
        halfcheetah-m  & $\mathbf{76.9} \pm 0.6$ & $70.4 \pm 2.1$ & $73.6 \pm 1.7$ & $54.9 \pm 5.3$ \\
        \bottomrule
    \end{tabular}
\end{table}

The results are shown in Table ~\ref{tab:ablation_results}, from which we can conclude that removing the penalty process causes a significant drop in performance. Even with truncation, the OOD samples will not be completely removed; the policy will still exploit the high prediction error in those ambiguous regions, resulting in compromised performance. Additionally, removing MPD is also harmful to training robustness because the standard ETM's energy landscape is too "loose" to be effectively utilized in predicting dynamics or detecting OOD samples. Besides, the results also demonstrate that hard truncation acts as a necessary safety gate, while relying only on soft penalization ($\lambda$) fails in regions far from the support, where the ensemble variance may not accurately reflect the true error magnitude. 

\subsection{Hyperparameter Sensitivity}
Finally, we analyze the sensitivity of MC-ETM to three key hyperparameters: the energy threshold $\delta$, the penalty coefficient $\lambda$, and the latent manifold dimension $d_z$.

\textbf{Energy Threshold ($\delta$):}
The threshold $\delta$ controls the size of the ``trust region.''
\begin{itemize}
    \item \textit{Low $\delta$:} The agent is overly conservative, truncating rollouts too early and behaving like a behavior cloning agent.
    \item \textit{High $\delta$:} The agent explores OOD regions where model error dominates.
    \item \textit{Optimal Range:} We find that setting $\delta$ based on the $95^{th}$ percentile of energy values in the training set provides a robust default across tasks.
\end{itemize}

\textbf{Latent Dimension ($d_m$):}
The dimension of the manifold $d_m$ balances reconstruction fidelity and compactness.
\begin{itemize}
    \item If $d_m$ is too small (e.g., $d_m=2$ for high-DoF robots), the autoencoder fails to reconstruct valid transitions ($s' \approx f_d(f_e(s'))$ degrades), creating a "bottleneck" error.
    \item If $d_m$ is too large (approaching the ambient dimension $D$), the MPD mechanism loses its efficacy because the "manifold" volume becomes sparse, and the curse of dimensionality returns.
    \item Empirically, we set $d_m$ to 5 for the hopper and 10
    for the halfcheetah and walker2d, which work best.
\end{itemize}

\textbf{Penalty Coefficient ($\lambda$):}
MC-ETM's sensitivity to $\lambda$ is mostly consistent with the prior uncertainty-based method MOBILE, while we made a slight adjustment to select the best $\lambda$ from $[0.5, 2.5]$. Basically, increasing $\lambda$ from $0$ generally yields more stable results until $\lambda$ is large enough to stifle all policy improvement.

\begin{table}[h]
\centering
\caption{Cross-environment Comparison of Mean Prediction Error between MLP, standard ETM and MC-ETM}
\label{tab:cross}
\begin{tabular}{@{}llccc@{}}
\toprule
\textbf{Model Env} & \textbf{Test Env} & \textbf{MLP} & \textbf{standard ETM} & \textbf{MC-ETM} \\ \midrule
\textbf{Hopper} &  &  &  & \\ 
random & random & $0.005827$ & $0.007405$ & $\mathbf{0.001986}$ \\ 
random & medium & $0.188629$ & $0.131319$ & $\mathbf{0.078207}$ \\ 
random & medium-replay & $0.147348$ & $0.123200$ & $\mathbf{0.066899}$ \\ 
random & medium-expert & $0.185224$ & $0.136078$ & $\mathbf{0.073670}$ \\ 
medium & random & $0.062951$ & $0.121988$ & $\mathbf{0.036943}$ \\ 
medium & medium & $0.011889$ & $0.009781$ & $\mathbf{0.003919}$ \\ 
medium & medium-replay &$0.056291$ & $0.044484$ & $\mathbf{0.032487}$ \\ 
medium & medium-expert & $0.020411$ & $0.018276$ & $\mathbf{0.007986}$ \\ 
medium-replay & random & $0.033892$ & $0.018150$ & $\mathbf{0.006980}$ \\ 
medium-replay & medium & $0.021166$ & $0.009098$ & $\mathbf{0.006313}$ \\ 
medium-replay & medium-replay & $0.025752$ & $0.008759$ & $\mathbf{0.006485}$ \\ 
medium-replay & medium-expert & $0.023580$ & $0.009679$ & $\mathbf{0.006846}$ \\ 
medium-expert & random & $0.055056$ & $0.042358$ & $\mathbf{0.038354}$ \\ 
medium-expert & medium & $0.010442$ & $0.009564$ & $\mathbf{0.003640}$ \\ 
medium-expert & medium-replay & $0.051497$ & $0.042419$ & $\mathbf{0.033011}$ \\ 
medium-expert & medium-expert & $0.009301$ & $\mathbf{0.002014}$ & $0.003147$ \\ \midrule
\textbf{HalfCheetah} &  &  &  & \\ 
random & random & $0.186056$ & $0.120956$ & $\mathbf{0.120095}$ \\ 
random & medium & $0.662978$ & $0.599604$ & $\mathbf{0.465097}$ \\ 
random & medium-replay & $0.555700$ & $0.443773$ & $\mathbf{0.364296}$ \\ 
random & medium-expert & $\mathbf{1.118878}$ & $1.319108$ & $1.141470$ \\ 
medium & random & $0.593238$ & $0.349435$ & $\mathbf{0.298135}$ \\ 
medium & medium & $0.194258$ & $0.246826$ & $\mathbf{0.093622}$ \\ 
medium & medium-replay & $0.388058$ & $0.237622$ & $\mathbf{0.193593}$ \\ 
medium & medium-expert & $0.393646$ & $1.795844$ & $\mathbf{0.343527}$ \\ 
medium-replay & random & $0.459957$ & $0.254801$ & $\mathbf{0.225953}$ \\ 
medium-replay & medium & $0.313714$ & $0.352766$ & $\mathbf{0.148552}$ \\ 
medium-replay & medium-replay & $0.357627$ & $0.295647$ & $\mathbf{0.164714}$ \\ 
medium-replay & medium-expert & $0.604161$ & $\mathbf{0.314497}$ & $0.441931$ \\ 
medium-expert & random & $0.554667$ & $0.491406$ & $\mathbf{0.320083}$ \\ 
medium-expert & medium & $0.208558$ & $0.153605$ & $\mathbf{0.104615}$ \\ 
medium-expert & medium-replay & $0.380486$ & $0.275669$ & $\mathbf{0.206782}$ \\ 
medium-expert & medium-expert & $0.166779$ & $0.159534$ & $\mathbf{0.087272}$ \\ \midrule
\textbf{Walker2d} &  &  &  & \\ 
random & random & $0.237939$ & $0.194076$ & $\mathbf{0.167385}$ \\ 
random & medium & $1.464581$ & $\mathbf{1.083580}$ & $1.759252$ \\ 
random & medium-replay &$1.333978$ & $\mathbf{0.925014}$ & $1.194890$ \\ 
random & medium-expert & $1.692117$& $\mathbf{1.168266}$ & $2.010816$ \\ 
medium & random & $0.996625$ & $0.862684$ & $\mathbf{0.544237}$ \\ 
medium & medium & $0.163006$ & $0.302731$ & $\mathbf{0.056605}$ \\ 
medium & medium-replay & $0.395338$ & $0.410104$ & $\mathbf{0.176679}$ \\ 
medium & medium-expert & $0.221335$ & $0.118636$ & $\mathbf{0.084960}$ \\ 
medium-replay & random & $0.809099$ & $0.790565$ & $\mathbf{0.347924}$ \\ 
medium-replay & medium & $0.216737$ & $0.134870$ & $\mathbf{0.102339}$ \\ 
medium-replay & medium-replay & $0.298443$ & $0.203577$ & $\mathbf{0.131649}$ \\ 
medium-replay & medium-expert &$0.245375$ & $0.236785$ & $\mathbf{0.115834}$ \\ 
medium-expert & random & $0.867276$ & $0.875764$ & $\mathbf{0.576945}$ \\ 
medium-expert & medium & $0.115061$ & $0.093144$ & $\mathbf{0.056529}$ \\ 
medium-expert & medium-replay & $0.340651$ & $0.407487$ & $\mathbf{0.175225}$ \\ 
medium-expert & medium-expert & $0.093424$ & $0.084948$ & $\mathbf{0.045204}$ \\ \bottomrule
\end{tabular}
\end{table}
\section{Rollout Visualization}

We demonstrate the visualization of simulated rollouts in Figure \ref{fig:all_envs}. The results of the ground truth, MC-ETM, standard ETM, and MLP-based dynamic models are listed in order from top to bottom. Each model is evaluated starting from an initial state and follows the same action sequence in the offline dataset. The results indicate that MC-ETMs have a much stronger capability in simulating dynamics and reducing cumulative errors, especially for the Mujoco environment involving discontinuous physical rigid-body transitions.

\begin{figure}
    \centering
    \begin{subfigure}{1\linewidth}
        \centering
        \includegraphics[width=1\linewidth]{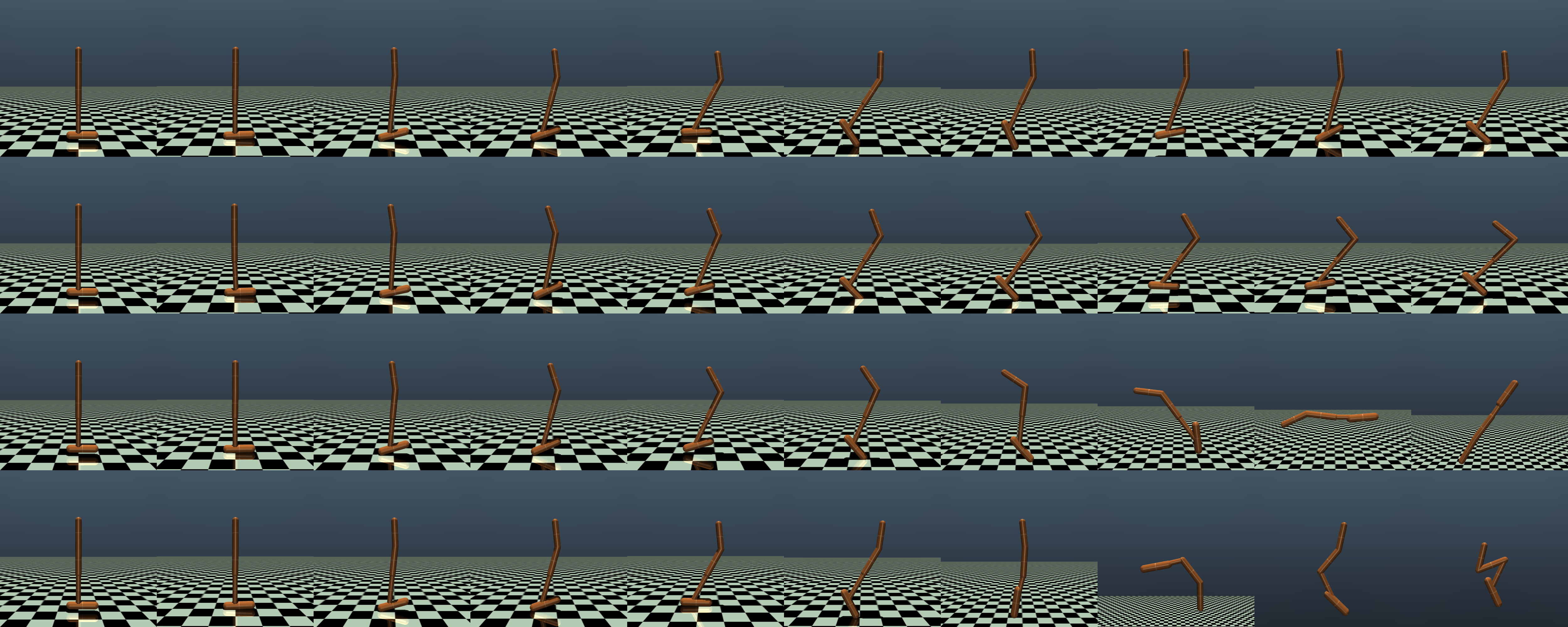}
        \caption{hopper-medium-expert}
        \label{fig:hopper}
    \end{subfigure}\\[1ex]
    \begin{subfigure}{1\linewidth}
        \centering
        \includegraphics[width=1\linewidth]{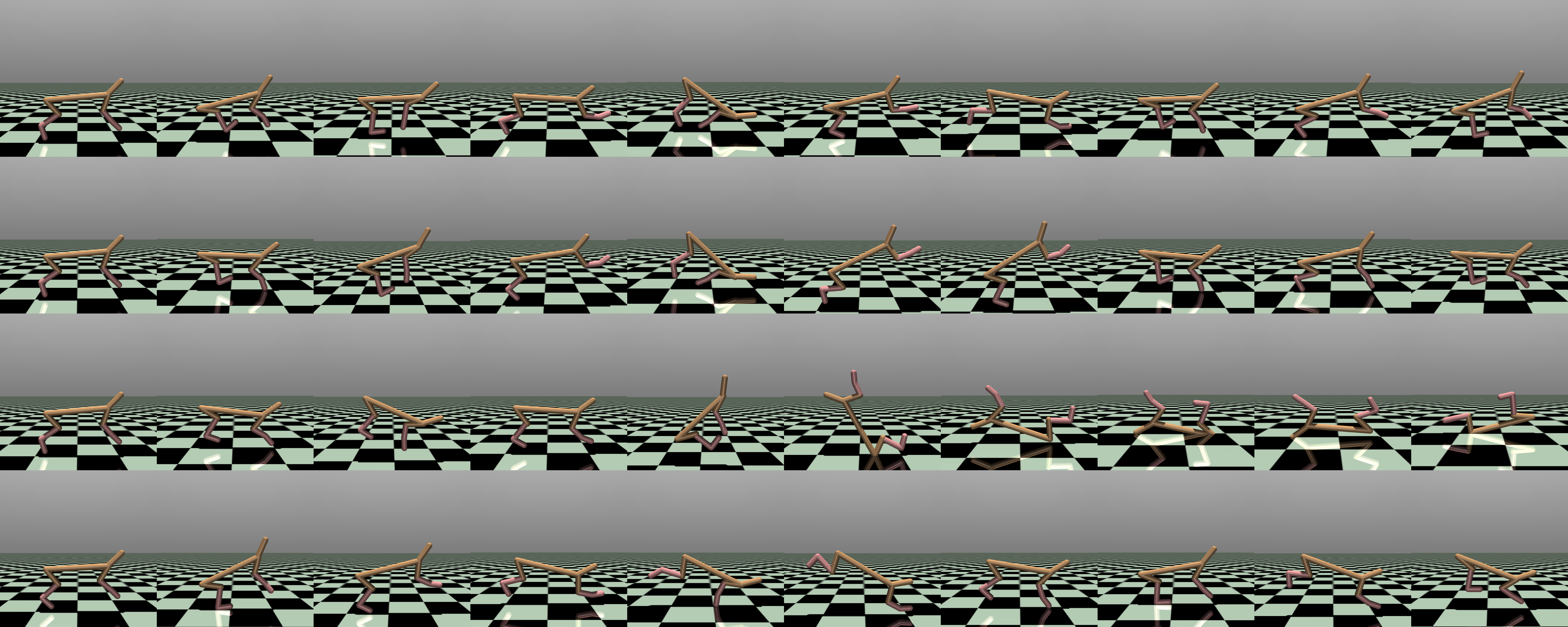}
        \caption{halfcheetah-medium-expert}
        \label{fig:halfcheetah}
    \end{subfigure}\\[1ex]
    \begin{subfigure}{1\linewidth}
        \centering
        \includegraphics[width=1\linewidth]{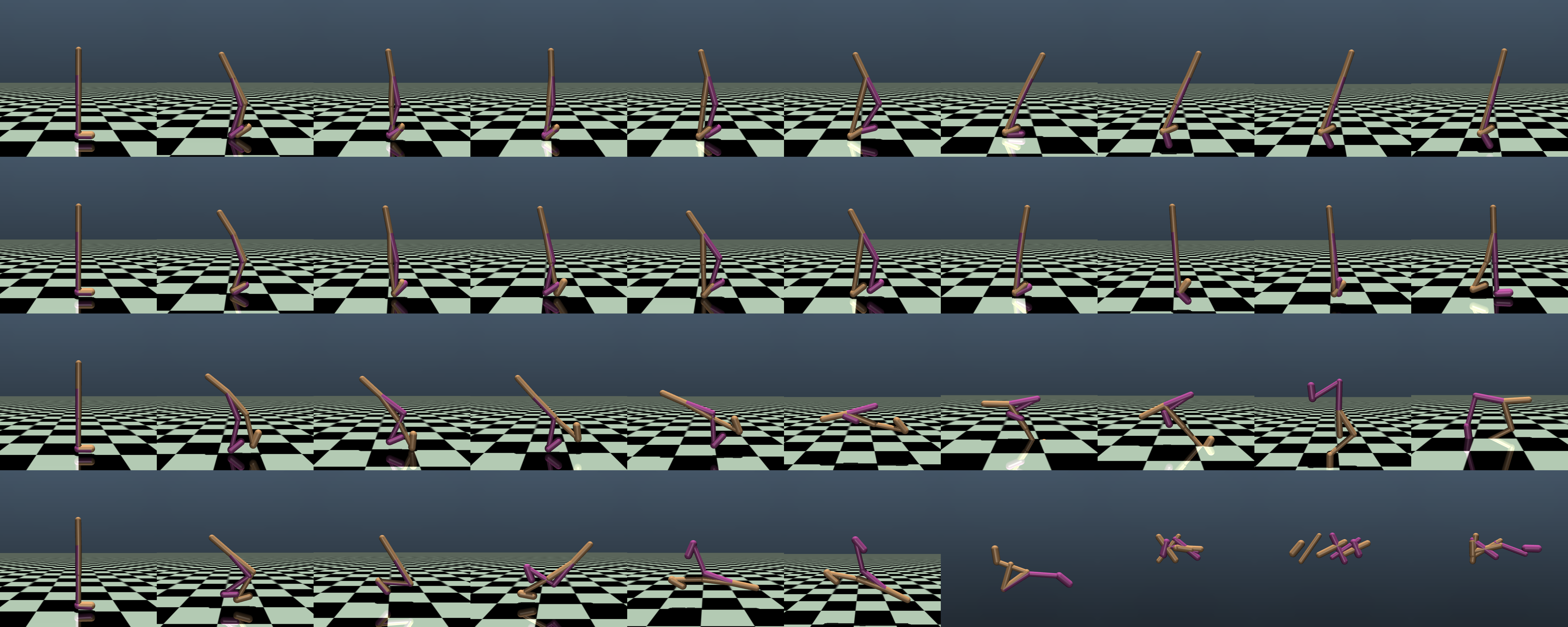}
        \caption{walker2d-medium-expert}
        \label{fig:walker2d}
    \end{subfigure}
    \caption{Visualized rollout trajectories with ground truth, MC-ETM, standard ETM and MLP in different Mujoco tasks.}
    \label{fig:all_envs}
\end{figure}

\end{document}